\newtheorem{theorem}{Theorem}
\newtheorem{lemma}[theorem]{Lemma}
\newtheorem{definition}[theorem]{Definition}
\newtheorem{remark}{Remark}
\definecolor{yxc}{RGB}{255,0,0}
\title{Anytime Acceleration of Gradient Descent}
\author{%
	Zihan Zhang\thanks{Paul G. Allen School of Computer Science and Engineering, University of Washington; email: \texttt{zihanz46@uw.edu, ssdu@cs.washington.edu}.}\\
 U.~Washington 
   \and
Jason D.~Lee\thanks{Department of Electrical and Computer Engineering, Princeton University; email: \texttt{jasonlee@princeton.edu}.}\\
 Princeton 
 \and
 Simon S.~Du\footnotemark[1]\\
 U.~Washington
 \and
 Yuxin Chen\thanks{Department of Statistics and Data Science, University of Pennsylvania; email: \texttt{yuxinc@wharton.upenn.edu}.}\\
 UPenn
}
\date{November 2024; ~Revised: December 2024}
\begin{document}
\maketitle

\begin{abstract}
This work investigates stepsize-based acceleration of gradient descent with {\em anytime} convergence guarantees. 
For smooth (non-strongly) convex optimization, we propose a stepsize schedule that allows gradient descent to achieve convergence guarantees of 
$O\big(T^{-\frac{2\log_2\rho}{1+\log_2\rho}}\big) \approx O(T^{-1.119})$ 
for any stopping time  $T$, 
where $\rho=\sqrt{2}+1$ is the silver ratio and  
the stepsize schedule is predetermined without prior knowledge of the stopping time. This result provides an affirmative answer to a COLT open problem  \citep{kornowski2024open} regarding whether stepsize-based acceleration  can yield anytime convergence rates of $o(T^{-1})$. We further extend our theory to yield anytime convergence guarantees of $\exp(-\Omega(T/\kappa^{0.893}))$ for smooth and strongly convex optimization, with $\kappa$ being the condition number. 
\end{abstract}

\tableofcontents

\section{Introduction}

Consider the standard problem of smooth convex optimization: 
\begin{align}
\mathop{\text{minimize}}\limits_{\bm{x}\in \mathbb{R}^d}~~f(\bm{x}),
\label{eq:convex-smooth}
\end{align}
where $f: \mathbb{R}^d\rightarrow \mathbb{R}$ is  smooth and convex (but not necessarily strongly convex). We assume without loss of generality that $f(\cdot)$ is 1-smooth (i.e., $\nabla f(\cdot)$ is 1-Lipschitz). 
In addition, we denote by $\bm{x}^*$ a minimizer of \eqref{eq:convex-smooth},  and set $f^*=f(\bm{x}^*)$. Our focal point is the classical gradient descent (GD) algorithm:
\begin{equation}
\bm{x}_{t+1} = \bm{x}_{t}-\alpha_t \nabla f(\bm{x}_t),
\qquad t\in \mathbb{N},
\label{eq:GD}
\end{equation}
where $\alpha_t>0$ stands for the stepsize at iteration $t$, and $\bm{x}_0$ denotes the initialization.

Textbook gradient descent theory typically recommends a constant stepsize schedule $\alpha_t \equiv \alpha \in (0,2)$, 
which ensures monotonicity of the objective value and guarantees that $f(\bm{x}_T)-f^*\leq O(1/T)$ for any stopping time $T$  \citep{nesterov2018lectures}. 
Somewhat surprisingly, a recent strand of work \citep{teboulle2023elementary,altschuler2023acceleration2,altschuler2023acceleration,altschuler2018greed,grimmer2024provably,grimmer2023accelerated,rotaru2024exact,grimmer2024accelerated} uncovered that adopting a time-varying stepsize schedule with occasional long steps can provably accelerate GD, achieving a convergence rate as fast as \citep{altschuler2023acceleration2,grimmer2024accelerated} 
\begin{equation}
    f(\bm{x}_T)-f^*\leq 
    O\big(T^{-\log_2\rho}\big)
    \quad 
    \text{if }T=2^k-1\text{ for some }k\in \mathbb{N}_+,
\label{eq:convergence-theory-silver}
\end{equation}
where 
$$\rho \coloneqq 1+\sqrt{2}$$ is the silver ratio and $\log_2\rho \approx 1.2716$. 
As a concrete example, this stepsize-based acceleration \eqref{eq:convergence-theory-silver} is achievable via the so-called {\em silver stepsize schedule} \citep{altschuler2023acceleration2}, which is constructed recursively and incorporates some large stepsizes far exceeding $2$.

While occasional huge steps suffice in speeding up GD, 
the convergence guarantees \eqref{eq:convergence-theory-silver} proven by \citet{altschuler2023acceleration2,grimmer2023accelerated} only hold 
for exponentially increasing stopping times (i.e., $T=2^k - 1$ for  $k\in \mathbb{N}_+$). 
Given the non-monotonicity of $f(\bm{x}_t)$ in $t$ due to the adoption of long steps, 
the intermediate points (i.e., those not corresponding to $t=2^k-1$) might incur significant sub-optimality gaps. 
In fact, it has been shown by \citet[Corollary~4]{kornowski2024open}  that the silver stepsize schedule cannot even guarantee $f(\bm{x}_t)-f^{*}\rightarrow 0$ at intermediate iterations. 

To remedy this issue,  \citet{grimmer2024composing,zhang2024accelerated} proposed  improved stepsize construction strategies that achieve $f(\bm{x}_T)-f^*\leq O(T^{-\log_2\rho})$ for a prescribed stopping time $T$. 
One limitation  of this approach is that it requires the stopping time $T$ to be known in advance, as the stepsize schedule is designed based on the specific value of $T$. 
In practice, however, there is no shortage of applications where the stopping time is not predetermined and might vary during the execution of the algorithm. 
This gives rise to the following natural question,  posed by \citet{kornowski2024open}  at COLT 2024 as an open problem: 
{
\setlist{rightmargin=\leftmargin}
\begin{itemize}
    \item[] {\em  \textbf{Question:} Is there a stepsize schedule $\{\alpha_t\}_{t=1}^{\infty}$ that allows GD to achieve $f(\bm{x}_T)-f^*\leq 
    o(1/T)$ for any stopping time $T\in \mathbb{N}$, where $\{\alpha_t\}_{t=1}^{\infty}$ is constructed without prior knowledge of $T$?  }
\end{itemize}
}
\noindent 
In other words, this open problem asks whether it is feasible to achieve {\em anytime} convergence guarantees for GD that improve upon the textbook rate $O(1/T)$.

\paragraph{Overview of our results.}
In this work, we answer the above-mentioned open problem affirmatively. Our main finding is summarized below.
\begin{theorem}\label{thm:main} 
There exists a stepsize schedule  $ \{\alpha_t\}_{t=1}^{\infty}$, 
generated without knowing the stopping time, 
such that the gradient descent iterates \eqref{eq:GD} obey\footnote{Throughout this paper, we use $\|\cdot\|$ to denote the $\ell_2$ norm.}
\begin{align}
f(\bm{x}_T) -f^* \leq  O\bigg(\frac{\|\bm{x}_1-\bm{x}^*\|^2}{T^{\vartheta}} \bigg) \qquad
\text{with }\vartheta =  \frac{2\log_2\rho   }{1+\log_2\rho} \approx 1.119
\end{align}
for an arbitrary stopping time $T\geq 1$.
\end{theorem}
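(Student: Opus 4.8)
The plan is to build $\{\alpha_t\}_{t\ge 1}$ as a concatenation of blocks $\mathcal B_1,\mathcal B_2,\ldots$ whose lengths $m_k$ grow geometrically, $m_k\asymp r^k$ for a constant ratio $r>1$ to be fixed, so that the cumulative length $T_k\coloneqq m_1+\cdots+m_k$ satisfies $T_k\asymp m_k$ and, crucially, the global time $t$ stays comparable to $m_{k+1}$ throughout the whole of block $\mathcal B_{k+1}$. Each $\mathcal B_k$ will be a ``tamed'' accelerated (silver-type) schedule. I would then maintain, by induction on $k$, a two-part invariant at each block boundary: (i) $f(\bm x_{T_k})-f^{*}\le C\|\bm x_1-\bm x^{*}\|^2/T_k^{\vartheta}$, and (ii) $\|\bm x_{T_k}-\bm x^{*}\|\le C\|\bm x_1-\bm x^{*}\|$; part (i) is the theorem at block boundaries, while part (ii) supplies the uniformly bounded ``radius'' needed to invoke the accelerated guarantees inside the next block. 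The substance is a building-block lemma controlling the iterates \emph{inside} a block: if $\mathcal B_m$ is started from a point with $f(\bm x)-f^{*}\le A\,D^2/m^{\vartheta}$ and $\|\bm x-\bm x^{*}\|\le B D$, then every iterate of $\mathcal B_m$ has gap $\le A'D^2/m^{\vartheta}$, the terminal iterate has gap $\le A'D^2/(\lambda m^{\vartheta})$ for a fixed improvement factor $\lambda>1$, and the terminal distance is $\le B'D$. Granting this, for an arbitrary stopping time $T$ lying in block $\mathcal B_{k+1}$ we have $T\asymp m_{k+1}$, and the lemma applied with $D\asymp\|\bm x_1-\bm x^{*}\|$ (using invariant (ii) at the end of block $k$, and noting invariant (i) there certifies the required entry gap once $r$ is chosen) gives $f(\bm x_T)-f^{*}\lesssim\|\bm x_1-\bm x^{*}\|^2/m_{k+1}^{\vartheta}\asymp\|\bm x_1-\bm x^{*}\|^2/T^{\vartheta}$; the factor $\lambda$ is matched to $r$ so that invariants (i)--(ii) regenerate at the end of $\mathcal B_{k+1}$, closing the induction.

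\textbf{The building block and the exponent $\vartheta$.} To construct $\mathcal B_m$ I would take the recursive silver-type schedule --- which from radius $D$ drives the terminal gap to $O(D^2/m^{\log_2\rho})$ --- and \emph{cap} its long steps at a size $\Theta(m^{\gamma})$ for a tunable exponent $\gamma>0$. The cap has two opposing effects, which I would track through a recursion paralleling the silver recursion but carrying bounds on the whole trajectory rather than only on the endpoint. Using the elementary per-step estimates $f(\bm x_{t+1})-f^{*}\le(\alpha_t-1)^2\bigl(f(\bm x_t)-f^{*}\bigr)$ and $\|\bm x_{t+1}-\bm x^{*}\|\le\|\bm x_t-\bm x^{*}\|+\alpha_t\|\nabla f(\bm x_t)\|$, together with the fact that at the points where long steps are applied the gradient norm is already far below the crude bound $\sqrt{2(f(\bm x_t)-f^{*})}$ (a property the recursive construction is designed to guarantee), one finds that a larger $\gamma$ speeds up the terminal decay to $D^2/m^{q(\gamma)}$ with $q(\gamma)$ increasing toward $\log_2\rho$, but inflates the transient overshoot right after each long step to order $D^2 m^{-\beta(\gamma)}$ with $\beta(\gamma)$ decreasing in $\gamma$. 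Staying within the envelope inside the block forces $\vartheta\le\min\{q(\gamma),\beta(\gamma)\}$; optimizing over the cap equates the two, and substituting the closed forms produced by the silver recursion, the resulting fixed-point relation is solved precisely by $\vartheta=\tfrac{2\log_2\rho}{1+\log_2\rho}$, which in turn fixes $\gamma$ and the improvement factor $\lambda$.

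\textbf{Main obstacle.} The crux is the building-block lemma --- bounding \emph{every} iterate of a (capped) accelerated schedule, not just its endpoint. Existing silver-type analyses are designed for the terminal iterate and are genuinely non-monotone in between; indeed \citet{kornowski2024open} show the uncapped silver schedule does not even converge at intermediate times. Thus I would need a self-contained recursive argument that propagates, across every level of the recursion, simultaneous control of (a) the maximal gap over the sub-schedule, (b) the terminal gap, and (c) the terminal distance, each in terms of the entry radius, and checks that the recursion closes with exponent $\vartheta$ exactly when the long-step cap is placed at the balancing value $\gamma$. A secondary, purely bookkeeping difficulty is that the entry gap of $\mathcal B_{k+1}$ sits only a constant factor below its envelope $D^2/m_{k+1}^{\vartheta}$ (not below it by a growing margin), so the improvement factor $\lambda$ must exceed the boundary-crossing ratio $(T_{k+1}/T_k)^{\vartheta}$, which is what pins down the admissible range of the geometric ratio $r$.
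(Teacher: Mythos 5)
Your high-level architecture (geometric blocks, a two-part invariant at block boundaries tracking both gap and distance, a balance between a terminal decay rate and an intermediate overshoot rate that pins down $\vartheta$) is genuinely parallel to the paper. But the actual mechanism you propose for the block is not the paper's, and the step you yourself flag as the ``crux'' is precisely where it breaks.

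The paper's blocks are \emph{uncapped} silver schedules $\overline{\bm{s}}_j$, repeated $k_j = \lfloor 2\cdot 2^{cj}\rfloor$ times and spliced together with join stepsizes $\varphi(\cdot,\cdot)$ so that every prefix up to a join point remains a primitive schedule. Primitivity does two things at once at each join step $t_i+1$: it gives $f_{t_i+1}-f^*\le\|\bm x_1-\bm x^*\|^2/A_{t_i+1}$ \emph{and}, via the $C_k\|\bm g_k\|^2$ term, gives $\|\bm g_{t_i+1}\|^2\le\|\bm x_1-\bm x^*\|^2/A_{t_i+1}^2$. The intermediate iterates inside the next silver block are then controlled through the \emph{gradient} at the entry point, not the gap: Lemma~\ref{lemma:refine1} shows that the weighted gradients $\alpha_{\tau_j}^2\|\bm g_{\tau_j}\|^2$ decay geometrically along the silver block's recursion, yielding $f_\ell - f_{t_i+1}\le 432\,\alpha_{t_i+1}^2\|\bm g_{t_i+1}\|^2$, and it is the balance between the join-step rate $t^{-(c+\log_2\rho)/(c+1)}$ and this overshoot rate $t^{-2c/(c+1)}$ (taking $c=\log_2\rho$) that produces $\vartheta$. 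Your proposal instead \emph{caps} the silver long steps at $\Theta(m^\gamma)$ and tracks the trajectory through the per-step estimate $f_{t+1}-f^*\le(\alpha_t-1)^2(f_t-f^*)$. This changes the schedule itself, destroys the primitivity on which the join-step bounds rest, and replaces the paper's gradient-side control by a function-value recursion whose naive accumulation of $(\alpha_t-1)^2$ factors across $\Theta(\log m)$ levels blows up; you acknowledge that one must instead argue the gradients at the long-step points are already small, but that statement is exactly the missing ingredient, and nothing in the proposal establishes it. Without a proof of the capped-block analogue of Lemma~\ref{lemma:refine1} --- and without verifying that the capped schedule still achieves a terminal rate $q(\gamma)$ close to $\log_2\rho$ --- the derivation of $\vartheta=\tfrac{2\log_2\rho}{1+\log_2\rho}$ from a $q(\gamma)=\beta(\gamma)$ fixed point is an assertion rather than an argument. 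So the gap is concrete: the building-block lemma is not a bookkeeping chore you can defer; it is the theorem. The paper closes it by a route (primitivity of the whole concatenation plus a geometric-decay property of weighted gradients along the silver recursion, Lemmas~\ref{lemma:key}--\ref{lemma:refine1}) that your capping scheme is not compatible with, and you would need to supply an entirely new argument for your modified block.
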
 
To the best of our knowledge, 
our result provides the first stepsize schedule that provably accelerates gradient descent in an anytime fashion.
%
%
The proposed stepsize schedule is inspired by, and constructed recursively based upon, 
the stepsize concatenation strategy recently proposed by \citet{zhang2024accelerated} (see also \citet{grimmer2024composing}). 
A key ingredient underlying our algorithm design is to ensure that the sizes of the gradients in intermediate iterations are well-controlled, 
so that the intermediate steps do not overshoot.

\paragraph{Other related work.}

In addition to the most relevant work described above, we mention in passing several other papers on gradient descent acceleration.  
\citet{drori2014performance} proposed the performance estimation problem (PEP) to identify tighter bounds on  the worst-case GD performance under constant stepsize schedules.
\citet{taylor2017smooth}  put forward closed-form necessary and sufficient conditions for smooth  (strongly) convex interpolation, offering a finite representation of these functions. 
\citet{das2024branch} attempted to find the best possible worst-case convergence rate by solving the PEP via a branch-and-bound method. 
To improve the pre-constant in the $O(1/T)$ convergence rate,
\citet{teboulle2023elementary} proposed a dynamic bounded stepsize schedule, and \citet{grimmer2024provably} considered the periodic stepsize schedule. Both methods achieve highly non-trivial constant improvements. Additionally,
\citet{rotaru2024exact}  studied the worst-case convergence rate for constant stepsize schedules for smooth non-convex functions, and established better convergence rates for weakly convex problems.
There have also been a series of papers \citep{altschuler2018greed,daccache2019performance,eloi2022worst}  that computed the exact worst-case performance of GD for some fixed small iteration $t$.
Noteworthily, most of the previous work focused on improving the worst-case convergence guarantees for  a given stopping time $T$, instead of pursuing  acceleration in an any-time fashion.

\paragraph{Paper organization.} Section~\ref{sec:pre} introduces some basics about GD, as well as  useful results from \cite{zhang2024accelerated} concerning the so-called ``primitive stepsize schedule.''  Construction of the proposed stepsize schedule and the proof of Theorem~\ref{thm:main}  provided in Section~\ref{sec:analysis}. In Section~\ref{sec:strong_cvx}, we further extend our result to accommodate smooth and strongly convex optimization. 


\paragraph{Notation.}
We also introduce a couple of notation to be used throughout. 
%
Denote by $\bm{1}$ the all-one vector with compatible dimension. 
Set 
\begin{equation}
f_i = f(\bm{x}_i)
\qquad \text{and} \qquad 
\bm{g}_i =  \nabla f(\bm{x}_i)
\label{eq:defn-fi-gi}
\end{equation}
for each iteration $i$. 
For a given stepsize schedule $\{\alpha_t\}_{t\geq 1}$, we set
\begin{align}
A_n \coloneqq \sum_{i=1}^{n-1}\alpha_{i}
\qquad \text{and}\qquad C_n \coloneqq  \frac{A_n (A_n+1)}{2}
\label{eq:defn-An-Cn}
\end{align}
for any integer $n\geq 2$, 
where in the notation of $A_n$ and $C_n$, we suppress the dependence on $\{\alpha_t\}_{t\geq 1}$ as long as it is clear from the context.   
Additionally, for an infinite sequence $\bm{r}=[r_j]_{j= 1}^{\infty}$, 
we define 
\begin{align}
A_n(\bm{r})= \sum_{i=1}^{n-1}r_i
\qquad \text{and}\qquad 
C_n(\bm{r})= \frac{A_n(\bm{r})(A_n(\bm{r})+1)}{2}. 
\label{eq:defn-An-Cn-inf}
\end{align}
In addition, we often use $\bm{\alpha}_{\ell:k}$ to indicate the stepsize subsequence $[\alpha_{\ell},\dots,\alpha_k]^{\top}$, and let $\alpha_i(\bm{s})$ denote the $i$-th stepsize in a stepsize sequence $\bm{s}$. 


\section{Preliminaries}\label{sec:pre}

\paragraph{Basic inequalities for smooth convex functions.} 

Let us gather a set of elementary inequalities for a 1-smooth convex function $f(\cdot)$: 
\begin{subequations}
\label{eq:basic-inequalities-f}
\begin{align}
 f_i -f^* - \langle \bm{g}_i, \bm{x}_i-\bm{x}^* \rangle + \frac{1}{2} \|\bm{g}_i\|^2 &\leq 0,
 \label{eq:fi-fstar-ineq}
\\  
f^* -f_i +\frac{1}{2}\|\bm{g}_i\|^2 &\leq 0,
\\  f_i -f_j - \langle \bm{g}_i, \bm{x}_i-\bm{x}_j \rangle + \frac{1}{2}\|\bm{g}_i-\bm{g}_j\|^2  &\leq 0 ,
\\  f_j -f_i- \langle \bm{g}_j, \bm{x}_j-\bm{x}_i \rangle + \frac{1}{2}\|\bm{g}_i-\bm{g}_j\|^2 &\leq 0,
\end{align}
and for any $\bm{x}$ and $\alpha> 0$,
\begin{align}
&f\big(\bm{x}-\alpha\nabla f(\bm{x})\big)-f(\bm{x}) \nonumber\\
&\qquad\leq\alpha\big\langle\nabla f\big(\bm{x}-\alpha\nabla f(\bm{x})\big),\nabla f(\bm{x})\big\rangle-\frac{1}{2}\big\|\nabla f(\bm{x})-\nabla f\big(\bm{x}-\alpha\nabla f(\bm{x})\big)\big\|^{2}.
\label{eq:1smooth0-diff}
\end{align}
See, e.g., \citet{beck2017first} or \citet[Section~2.1]{zhang2024accelerated} for proofs of these well-known facts. 
In addition, given that $\alpha\langle\bm{a},\bm{b}\rangle=\alpha\|\bm{b}\|^{2}+\alpha\langle\bm{a}-\bm{b},\bm{b}\rangle\leq\alpha\|\bm{b}\|^{2}+\frac{\alpha^{2}}{2}\|\bm{b}\|^{2}+\frac{1}{2}\|\bm{a}-\bm{b}\|^{2}$ (a consequence of the Cauchy-Schwarz inequality), we can further upper bound 
\eqref{eq:1smooth0-diff} by
\begin{align}
f\big(\bm{x}-\alpha\nabla f(\bm{x})\big)-f(\bm{x}) & \leq\frac{\alpha^{2}+2\alpha}{2}\|\nabla f(\bm{x})\|^{2}
\qquad \forall \alpha >0
\text{ and }
\bm{x} .
\label{eq:local_error}
\end{align}
%
\end{subequations}


\paragraph{Primitive stepsize schedule and concatenation.} 
Next, we formalize the notion of  ``primitive stepsize schedule'' as introduced in \citet[Definition 3]{zhang2024accelerated}. 
\begin{definition}
[Primitive stepsize schedule]\label{defn:primitive}
A stepsize schedule $\bm{\alpha}_{1:k-1}=[\alpha_1,\dots,\alpha_{k-1}]\in \mathbb{R}_+^{k-1}$ is said to be primitive if
\begin{subequations}
\begin{align}&
A_{k}(f_k-f^*) + C_k \|\bm{g}_k\|^2 +\frac{1}{2}\|\bm{x}_k-\bm{x}^*\|^2
\notag\\ &\quad\leq \frac{1}{2}\|\bm{x}_1-\bm{x}^*\|^2 +\sum_{i=1}^{k-1} \alpha_i \left(f_i - f^* -\langle \bm{g}_i, \bm{x}_i-\bm{x}^*\rangle + \frac{1}{2}\|\bm{g}_i\|^2 \right)\label{eq:defn-primitive-1}
\\  &\quad \leq \frac{1}{2}\|\bm{x}_1-\bm{x}^*\|^2,
\label{eq:defn-primitive}
\end{align}
\end{subequations}
where we recall the definition of $A_k$ and $C_k$ in \eqref{eq:defn-An-Cn}. 
\end{definition}
When $k=1$, \eqref{eq:defn-primitive-1} holds trivially, which means that the null sequence is a primitive stepsize schedule .
As it turns out, 
two primitive stepsize schedules can be concatenated to form a longer primitive sequence, which forms the basis for the convergence guarantees in \citet{zhang2024accelerated}. 
The following lemma, derived by  \citet{zhang2024accelerated}, makes precise this key property; 
for completeness, we provide a proof  in Appendix~\ref{sec:proof-lemma:key1}. 
%
%

%
\begin{lemma}\label{lemma:key1}(\citet[Theorem 3.1]{zhang2024accelerated})
Consider a stepsize schedule 
$\{\alpha_t\}_{t\geq 1}$. 
Suppose that 
 both $\bm{\alpha}_{1:\ell-1}=[\alpha_1,\dots,\alpha_{\ell-1}]^{\top}$ and $\bm{\alpha}_{\ell+1:k-1}= [\alpha_{\ell+1},\dots,\alpha_{k-1}]^{\top}$ are primitive. 
Define the following function
\begin{equation}
\varphi(x,y) \coloneqq \frac{-(x+y)+\sqrt{  (x+y+2)^2 +4(x+1)(y+1)
 }}{2}.
\end{equation}
Then, $\bm{\alpha}_{1:k-1}=[\alpha_1,\dots,\alpha_{k-1}]$ is also primitive if
$$
\alpha_{\ell}=\varphi\big(
\bm{1}^{\top}\bm{\alpha}_{1:\ell-1}, \bm{1}^{\top}\bm{\alpha}_{\ell+1:k-1}\big).
$$
%
\end{lemma}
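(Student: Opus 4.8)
The plan is to establish the primitive inequality \eqref{eq:defn-primitive-1}--\eqref{eq:defn-primitive} for the concatenated schedule $\bm{\alpha}_{1:k-1}$ by combining the two ``sub-inequalities'' that hold by assumption for the blocks $\bm{\alpha}_{1:\ell-1}$ and $\bm{\alpha}_{\ell+1:k-1}$, together with the elementary smoothness/convexity inequalities \eqref{eq:basic-inequalities-f} applied at the junction iteration $\ell$. First I would observe that since $\bm{\alpha}_{1:\ell-1}$ is primitive, applying its definition with base point $\bm{x}_1$ yields a bound on $A_{\ell}(f_{\ell}-f^*)+C_{\ell}\|\bm{g}_{\ell}\|^2+\tfrac12\|\bm{x}_{\ell}-\bm{x}^*\|^2$ in terms of $\tfrac12\|\bm{x}_1-\bm{x}^*\|^2$; and since $\bm{\alpha}_{\ell+1:k-1}$ is primitive, applying its definition with base point $\bm{x}_{\ell+1}$ yields a bound on $A'_{k}(f_k-f^*)+C'_{k}\|\bm{g}_k\|^2+\tfrac12\|\bm{x}_k-\bm{x}^*\|^2$ in terms of $\tfrac12\|\bm{x}_{\ell+1}-\bm{x}^*\|^2$, where $A'_k,C'_k$ are the partial sums for the \emph{second} block only. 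The point is that the new quantities $A_k=\bm{1}^\top\bm{\alpha}_{1:k-1}$ and $C_k$ for the full schedule are larger than the corresponding block quantities, so the key task is to absorb the difference.

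Next I would introduce the bridging step. Since $\bm{x}_{\ell+1}=\bm{x}_\ell-\alpha_\ell\bm{g}_\ell$, expand $\tfrac12\|\bm{x}_{\ell+1}-\bm{x}^*\|^2 = \tfrac12\|\bm{x}_\ell-\bm{x}^*\|^2 - \alpha_\ell\langle\bm{g}_\ell,\bm{x}_\ell-\bm{x}^*\rangle + \tfrac{\alpha_\ell^2}{2}\|\bm{g}_\ell\|^2$, and use \eqref{eq:fi-fstar-ineq} at $i=\ell$ to replace $-\langle\bm{g}_\ell,\bm{x}_\ell-\bm{x}^*\rangle$ by a bound involving $-(f_\ell-f^*)-\tfrac12\|\bm{g}_\ell\|^2$. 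Also, the long-step inequality \eqref{eq:1smooth0-diff} (or its cruder form) relates $f_{\ell+1}-f_\ell$ to $\alpha_\ell\langle\bm{g}_{\ell+1},\bm{g}_\ell\rangle$ and $\|\bm{g}_\ell-\bm{g}_{\ell+1}\|^2$, which lets me convert the $f_{\ell+1}-f^*$ terms appearing through the second block (note $A'_k$ multiplies $f_k-f^*$, not $f_{\ell+1}-f^*$, so I actually need to track how $f_k-f^*$ and $\|\bm{g}_k\|^2$ carry through — here I'd use that the second block's primitive inequality already delivers those on the left). Chaining these, the coefficient of each leftover term — namely $(f_\ell-f^*)$, $\|\bm{g}_\ell\|^2$, and the cross terms — becomes a function of $\alpha_\ell$, $x:=\bm{1}^\top\bm{\alpha}_{1:\ell-1}$ and $y:=\bm{1}^\top\bm{\alpha}_{\ell+1:k-1}$.

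The crux of the argument, and the step I expect to be the main obstacle, is the algebraic verification that with the specific choice $\alpha_\ell=\varphi(x,y)$ all the residual terms either cancel or acquire the correct sign so that the final inequality collapses exactly to the required form with the full-schedule constants $A_k=x+y+\alpha_\ell$ and $C_k=A_k(A_k+1)/2$. Concretely, one needs $C_k \le C_\ell + (\text{second-block } C') + (\text{contribution from }\alpha_\ell\text{ and the cross terms})$ with matching coefficients on $\|\bm{g}_\ell\|^2$, which is precisely where the quadratic defining $\varphi$ comes from: $\varphi(x,y)$ is the positive root of $t^2 + (x+y)t - \big((x+1)(y+1) + \tfrac{(x+y)^2+\cdots}{\cdots}\big)=0$, i.e. $t^2+(x+y)t = (x+1)(y+1)+\tfrac12\big((x+y+2)^2+\cdots\big)$ after rearrangement — I would reverse-engineer the exact balance by writing out the target identity $C_k = C_\ell + C'_k + \alpha_\ell(A'_k+?) $ and solving for $\alpha_\ell$, then checking it matches $\varphi$. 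Once the bookkeeping on $\|\bm{g}_\ell\|^2$ and $\langle\bm{g}_\ell,\bm{g}_{\ell+1}\rangle$ is confirmed nonnegative-definite (a small $2\times2$ quadratic form), \eqref{eq:defn-primitive-1} follows, and \eqref{eq:defn-primitive} is immediate since the extra sum $\sum_{i=1}^{k-1}\alpha_i(\cdots)$ is $\le 0$ termwise by \eqref{eq:fi-fstar-ineq}. I would organize the write-up so that the messy constant-matching is isolated into one displayed computation, keeping the structural part (combine block 1, bridge at $\ell$, combine block 2) clean.
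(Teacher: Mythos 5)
Your high-level plan — combine the two block-level primitive inequalities, bridge at the junction $\ell$ via the update rule and \eqref{eq:fi-fstar-ineq}, then verify that $\alpha_\ell=\varphi(x,y)$ makes the algebra close — is the right skeleton, and your observation that the final $\le\tfrac12\|\bm{x}_1-\bm{x}^*\|^2$ follows termwise from \eqref{eq:fi-fstar-ineq} is correct. However, there is a genuine gap in the middle of the argument that your proposal does not fill. After adding block~1's primitive inequality (which controls $x(f_\ell-f^*)$), block~2's primitive inequality (which controls $y(f_k-f^*)$ with $y=\bm{1}^\top\bm{\alpha}_{\ell+1:k-1}$), and the bridging identity at $\ell$, the coefficient of $(f_k-f^*)$ you have on the left is only $y$, whereas the target requires $z(f_k-f^*)$ with $z=x+y+\alpha_\ell=A_k$. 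The deficit $(x+\alpha_\ell)(f_k-f^*)$ must be manufactured by converting $(x+\alpha_\ell)(f_\ell-f^*)$ into $(x+\alpha_\ell)(f_k-f^*)$, which requires a quantitative upper bound on $f_k-f_\ell$ across the entire second block — together with the precise coefficients on $\|\bm{g}_\ell\|^2$ and $\|\bm{g}_k\|^2$ that arise from it — not merely the one-step inequality \eqref{eq:1smooth0-diff} at iteration $\ell$, which only controls $f_{\ell+1}-f_\ell$.

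This is precisely the role of the auxiliary descent result (Lemma~\ref{lemma:key2}), which states that for a primitive schedule with a prepended step of size $\alpha$,
\begin{align}
f_0-f_{\ell}\;\geq\;\frac{A_{\ell} + 3\alpha - 2\alpha^2}{2(A_{\ell}+2- \alpha)} \|\bm{g}_0\|^2 + \frac{2A_{\ell}^2 + 3A_{\ell}+\alpha}{2(A_{\ell}+2-\alpha)} \|\bm{g}_{\ell}\|^2.\nonumber
\end{align}
Applying this to the second block (with $\bm{x}_\ell$ as the anchor and $\alpha=\alpha_\ell$) and multiplying by $(x+\alpha_\ell)$ produces exactly the third inequality that, combined with the two primitive inequalities you have, allows the residual coefficients of $\|\bm{g}_\ell\|^2$ and $\|\bm{g}_k\|^2$ to cancel under the choice $\alpha_\ell=\varphi(x,y)$. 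Your plan of ``reverse-engineering the balance by writing out the target identity $C_k = C_\ell + C'_k + \alpha_\ell(\cdots)$'' would not actually reproduce $\varphi$, because the quadratic defining $\varphi$ comes from the interaction of \emph{all three} inequalities; without the descent lemma the residual $2\times 2$ quadratic form is not PSD and the argument does not close. You also gloss over the fact that proving the descent lemma itself is nontrivial: it again uses the primitive inequality for the inner block, together with both smooth-convexity inequalities anchored at $\bm{x}_0$ and at $\bm{x}_\ell$, and is where most of the real work of the lemma lives. So the missing idea is the whole-block descent estimate, not a bookkeeping detail.
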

%
%
\noindent 
With Lemma~\ref{lemma:key1} in mind, we find it convenient to introduce the concatenation function as follows: for any two nonnegative vectors $\bm{s}$ and $\bm{r}$, define 
\begin{align}
\mathsf{concat}(\bm{s},\bm{r})
\coloneqq 
\big[\bm{s}^{\top},\varphi(\bm{1}^{\top}\bm{s},\bm{1}^{\top}\bm{r}),\bm{r}^{\top}\big]^{\top}.
\label{eq:defn-concat}
\end{align}

As an immediate consequence, 
if we have available a collection of basic primitive sequences --- denoted by $\{\bm{s}_i\}_{i\geq 1}$, then we can concatenate them as follows: 
\begin{subequations}
\label{eq:defn-s-hat}
\begin{align}
\widehat{\bm{s}}_{0} & =[\,],\\
\widehat{\bm{s}}_{i} & \leftarrow \mathsf{concat}(\widehat{\bm{s}}_{i-1},\bm{s}_{i}),\qquad i=1,2,\dots\\
\widehat{\bm{s}} & \leftarrow\lim_{i\rightarrow\infty}\widehat{\bm{s}}_{i}.
\end{align}
\end{subequations}
The resulting $\widehat{\bm{s}}$ is well-defined and primitive, as asserted by the following lemma. 
\begin{lemma}\label{lemma:well-define} 
Suppose that each $\bm{s}_i$ ($i\geq 1$) is primitive. 
Then each $\widehat{\bm{s}}_i$ ($i\geq 1$) is primitive,  
and the infinite sequence $\widehat{\bm{s}}$  is well-defined and primitive. 
%
\end{lemma}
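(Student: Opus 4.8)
The claim has two parts: (i) each finite concatenation $\widehat{\bm{s}}_i$ is primitive, and (ii) the limit $\widehat{\bm{s}} = \lim_{i\to\infty}\widehat{\bm{s}}_i$ exists and is itself primitive. The plan is to handle (i) by induction on $i$, and (ii) by a stabilization argument showing the sequences $\widehat{\bm{s}}_i$ are eventually constant in every fixed coordinate.

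For part (i), I would induct on $i$. The base case $\widehat{\bm{s}}_0 = [\,]$ is primitive since the null sequence is primitive (noted right after Definition~\ref{defn:primitive}). For the inductive step, assume $\widehat{\bm{s}}_{i-1}$ is primitive; since $\bm{s}_i$ is primitive by hypothesis, I would apply Lemma~\ref{lemma:key1} with the decomposition $\bm{\alpha}_{1:\ell-1} = \widehat{\bm{s}}_{i-1}$, $\bm{\alpha}_{\ell+1:k-1} = \bm{s}_i$, and the inserted middle stepsize $\alpha_\ell = \varphi(\bm{1}^\top\widehat{\bm{s}}_{i-1}, \bm{1}^\top\bm{s}_i)$. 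This is exactly the stepsize prescribed by $\mathsf{concat}$ in \eqref{eq:defn-concat}, so Lemma~\ref{lemma:key1} gives that $\widehat{\bm{s}}_i = \mathsf{concat}(\widehat{\bm{s}}_{i-1},\bm{s}_i)$ is primitive, completing the induction.

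For part (ii), the key observation is structural: when we form $\widehat{\bm{s}}_i = \mathsf{concat}(\widehat{\bm{s}}_{i-1},\bm{s}_i)$, we keep all of $\widehat{\bm{s}}_{i-1}$ as a prefix, then append one new stepsize $\varphi(\bm{1}^\top\widehat{\bm{s}}_{i-1},\bm{1}^\top\bm{s}_i)$, then append $\bm{s}_i$. Hence $\widehat{\bm{s}}_{i-1}$ is a \emph{prefix} of $\widehat{\bm{s}}_i$ for every $i$, so the first $\mathrm{len}(\widehat{\bm{s}}_{i-1})$ coordinates never change again. Since $\mathrm{len}(\widehat{\bm{s}}_i) \geq \mathrm{len}(\widehat{\bm{s}}_{i-1}) + 1$ grows without bound (as long as each $\bm{s}_i$ is nonempty, or more carefully: the lengths are nondecreasing and unbounded provided infinitely many $\bm{s}_i$ are nonempty — I should state whatever nonnegativity/nontriviality convention the paper uses here), every fixed coordinate $j$ stabilizes after finitely many steps, so $\widehat{\bm{s}}$ is well-defined coordinatewise. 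To see $\widehat{\bm{s}}$ is primitive, note that primitivity of an infinite sequence (via Definition~\ref{defn:primitive}) is a statement about each finite prefix $\widehat{\bm{s}}_{1:k-1}$; but every finite prefix of $\widehat{\bm{s}}$ coincides with a finite prefix of some $\widehat{\bm{s}}_i$ for $i$ large enough, and a prefix of a primitive schedule is primitive (this follows directly from inequality \eqref{eq:defn-primitive}, which is exactly the defining inequality applied at the truncation point — or one can invoke Lemma~\ref{lemma:key1} again with a null second block). Thus primitivity of $\widehat{\bm{s}}_i$ for all $i$ transfers to $\widehat{\bm{s}}$.

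The main obstacle, and the only place requiring care, is the claim that \emph{every prefix of a primitive schedule is primitive}: Definition~\ref{defn:primitive} is stated for a full schedule terminating at index $k$, and I need that truncating at any earlier index $k' < k$ still yields a primitive schedule. I expect this to follow because the left-hand side of \eqref{eq:defn-primitive-1} is a sum of terms that each exhibit the ``primitivity template'' — indeed, stopping the concatenation process early is itself just $\mathsf{concat}$ with an empty trailing block, and $\varphi(x,0)$ together with Lemma~\ref{lemma:key1} (or a direct check that appending nothing preserves the invariant) gives the needed closure. If the paper's conventions make ``prefix-closedness'' immediate from how primitivity was derived in \citet{zhang2024accelerated}, this collapses to a one-line remark; otherwise it is the substantive step and I would verify it by re-running the telescoping argument behind Lemma~\ref{lemma:key1} with the second argument set to the empty sequence.
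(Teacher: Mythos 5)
Your induction for part (i) and your stabilization argument for well-definedness in part (ii) are exactly the paper's proof: both hinge on the fact that $\widehat{\bm{s}}_{i-1}$ is a prefix of $\widehat{\bm{s}}_i = \mathsf{concat}(\widehat{\bm{s}}_{i-1},\bm{s}_i)$, combined with Lemma~\ref{lemma:key1} and a base case at $\widehat{\bm{s}}_0=[\,]$. That part is fine.

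The gap is the step you flag yourself: the claim that \emph{every prefix of a primitive schedule is primitive}. Neither of your two proposed justifications works. It does not ``follow directly from inequality \eqref{eq:defn-primitive}'': the defining inequality of Definition~\ref{defn:primitive} is an inequality about the \emph{endpoint} iterate $\bm{x}_k$ (through $A_k$, $C_k$, $f_k$, $\bm{g}_k$, $\bm{x}_k$); truncating the schedule at some $k'<k$ changes the endpoint, and the inequality at $k$ does not specialize to, or imply, the inequality at $k'$. Likewise, ``Lemma~\ref{lemma:key1} with a null second block'' runs in the wrong direction: it tells you that \emph{extending} a primitive $\bm{\alpha}_{1:\ell-1}$ by the one step $\varphi(\bm{1}^\top\bm{\alpha}_{1:\ell-1},0)$ preserves primitivity, not that \emph{truncating} a primitive $\bm{\alpha}_{1:k-1}$ to a shorter prefix does. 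So as written, this part of the argument is an unsubstantiated (and, I believe, false in general) claim.

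The good news is that you do not need it, and the paper does not use it. Definition~\ref{defn:primitive} is only stated for finite schedules; when Lemma~\ref{lemma:well-define} says the infinite $\widehat{\bm{s}}$ is ``primitive,'' the intended (and the only useful) meaning is that each finite block-level truncation $\widehat{\bm{s}}_i$ is primitive --- which is exactly what your induction already establishes. Indeed, the only way primitivity is consumed downstream (in Section~\ref{sec:proof}) is through ``$\widehat{\bm{s}}_i$ is primitive'' applied at the join steps $t_i+1$; arbitrary intermediate prefixes are handled by a completely different mechanism (Lemma~\ref{lemma:refine1}). So you should simply drop the prefix-closedness claim and read the lemma's last clause as a restatement of the first.
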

\begin{proof}
For each $i\geq 1$, $\widehat{\bm{s}}_{i-1}$ is always a prefix of $\mathsf{concat}(\widehat{\bm{s}}_{i-1}, \bm{s}_i )=\widehat{\bm{s}}_{i}$. As a result, for any $n\geq 1$, the $n$-th element of $ \lim_{i\to \infty}\widehat{\bm{s}}_i$ exists, and hence $\widehat{\bm{s}}$ is well-defined.

Additionally, note that the null $\widehat{\bm{s}}_0$ is  primitive. Assuming that $\widehat{\bm{s}}_{i-1}$ is primitive for some $i\geq 1$, 
we see from Lemma~\ref{lemma:key1} that $\widehat{\bm{s}}_i = \mathsf{concat}(\widehat{\bm{s}}_{i-1},\bm{s}_i)$ is also primitive. Therefore, an induction argument shows that $\widehat{\bm{s}}_i$ is  primitive for every $i\geq 1$, and so is $\widehat{\bm{s}}$.
\end{proof}

\paragraph{Silver stepsize schedule.}
We now introduce the silver stepsize schedule proposed by \citep{altschuler2023acceleration2}. %
\begin{definition}[Silver stepsize schedule]\label{def:silver}
Let $\overline{\bm{s}}_0=[\,]$ be the null sequence, and set $\overline{\bm{s}}_{i} = \mathsf{concat}(\overline{\bm{s}}_{i-1},\overline{\bm{s}}_{i-1})$ for each $i\geq 1$. Then 
$\overline{\bm{s}}_i$ is said to be the $i$-th order silver stepsize schedule, 
with the (limiting) silver stepsize schedule given by $\overline{\bm{s}}\coloneqq \lim_{i\to\infty}\overline{\bm{s}}_i$. 
\end{definition}
Given that $\overline{\bm{s}}_i$ is always a prefix of $\overline{\bm{s}}_{i+1} = \mathsf{concat}(\overline{\bm{s}}_{i},\overline{\bm{s}}_{i}) $ for each $i\geq 0$, the limiting  $\lim_{i\to\infty}\overline{\bm{s}}_i$ exists and hence $\overline{\bm{s}}$ is well-defined. 
Moreover, we single out the following properties about the silver stepsize schedule.
\begin{lemma}\label{lemma:basic_property}
For each $i\geq 1$, $\overline{\bm{s}}_{i}$ is a primitive sequence with length $2^{i}-1$. Moreover, it holds that
%
\begin{equation}
\bm{1}^{\top} \overline{\bm{s}}_i = \rho^i-1,
\qquad i=0,1,\dots
\label{eq:si-length-seq}
\end{equation}
where we recall that $\rho = \sqrt{2}+1$.
\end{lemma}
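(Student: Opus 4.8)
The plan is to prove Lemma~\ref{lemma:basic_property} by induction on $i$, establishing the three assertions (primitivity, length $2^i-1$, and $\bm{1}^\top\overline{\bm{s}}_i = \rho^i - 1$) simultaneously. The base case $i=0$ (or $i=1$) is immediate: $\overline{\bm{s}}_0=[\,]$ is the null sequence, which is primitive by the remark following Definition~\ref{defn:primitive}, has length $2^0-1=0$, and satisfies $\bm{1}^\top\overline{\bm{s}}_0 = 0 = \rho^0-1$.

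For the inductive step, suppose the claims hold for $\overline{\bm{s}}_{i-1}$. Since $\overline{\bm{s}}_i = \mathsf{concat}(\overline{\bm{s}}_{i-1},\overline{\bm{s}}_{i-1})$, and both arguments are primitive by the induction hypothesis, Lemma~\ref{lemma:key1} (via the definition of $\mathsf{concat}$ in \eqref{eq:defn-concat}) immediately gives that $\overline{\bm{s}}_i$ is primitive. The length claim is also straightforward: concatenation of two sequences of length $2^{i-1}-1$ inserts one extra stepsize $\varphi(\cdot,\cdot)$ between them, so the new length is $2(2^{i-1}-1)+1 = 2^i-1$.

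The only computation with any content is the sum formula \eqref{eq:si-length-seq}. Writing $a \coloneqq \bm{1}^\top\overline{\bm{s}}_{i-1} = \rho^{i-1}-1$ by the induction hypothesis, we have
\begin{align*}
\bm{1}^\top\overline{\bm{s}}_i = 2a + \varphi(a,a) = 2a + \frac{-2a + \sqrt{(2a+2)^2 + 4(a+1)^2}}{2} = 2a + \big(-a + \sqrt{2}\,(a+1)\big) = (1+\sqrt{2})(a+1) - 1.
\end{align*}
Substituting $a+1 = \rho^{i-1}$ and $1+\sqrt{2} = \rho$ yields $\bm{1}^\top\overline{\bm{s}}_i = \rho^i - 1$, closing the induction.

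There is essentially no hard part here: the lemma is a bookkeeping consequence of Lemma~\ref{lemma:key1} plus an elementary simplification of the quadratic formula defining $\varphi$. The one place to be careful is the algebra in evaluating $\varphi(a,a)$ — in particular verifying that $(2a+2)^2 + 4(a+1)^2 = 8(a+1)^2$ so that the square root simplifies cleanly to $2\sqrt{2}\,(a+1)$ — but this is routine. I would also remark explicitly that the well-definedness of the limiting $\overline{\bm{s}}$ was already argued before the lemma statement (each $\overline{\bm{s}}_i$ is a prefix of $\overline{\bm{s}}_{i+1}$), so nothing further is needed on that front.
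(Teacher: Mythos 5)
Your proof is correct and follows essentially the same inductive approach as the paper, with the only cosmetic difference being that you spell out the simplification $\varphi(a,a) = (\sqrt{2}-1)a+\sqrt{2}$ via the identity $(2a+2)^2+4(a+1)^2=8(a+1)^2$, which the paper states without derivation.
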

\begin{proof}
First of all, 
Lemma~\ref{lemma:key1} tells us that $\overline{\bm{s}}_{k+1}=\mathsf{concat}(\overline{\bm{s}}_k,\overline{\bm{s}}_k)$ is primitive as long as 
$\overline{\bm{s}}_k$ is primitive. 
Given that $\overline{\bm{s}}_0=[\,]$ is also primitive, we can prove by induction that $\overline{\bm{s}}_i$ is primitive for every $i\geq 1$.

Next, we prove \eqref{eq:si-length-seq} by induction. To begin with, 
the claim \eqref{eq:si-length-seq} is trivial for $i=0$. 
Now assuming that \eqref{eq:si-length-seq}  holds for $k$,  
we have
\begin{align}
\bm{1}^{\top}\overline{\bm{s}}_{k+1} & =2(\bm{1}^{\top}\overline{\bm{s}}_{i})+\varphi(\bm{1}^{\top}\overline{\bm{s}}_{k},\bm{1}^{\top}\overline{\bm{s}}_{k})\notag\\
 & =2(\rho^{k}-1)+\big\{(\sqrt{2}-1)(\rho^{k}-1)+\sqrt{2}\big\}\nonumber\\
 & =\rho(\rho^{k}-1)+\sqrt{2}=\rho^{k+1}-1,
\end{align}
which justifies \eqref{eq:si-length-seq} for $i+1$. This establishes \eqref{eq:si-length-seq} by induction. 
%
%
\end{proof}



\section{Analysis}
\label{sec:analysis}

\subsection{Construction of our stepsize schedule}\label{sec:const} 

Armed with the silver stepsize schedules $\{\overline{\bm{s}}_j\}_{j\geq 0}$  introduced in Definition~\ref{def:silver} --- which serve   
as basic primitive sequences --- 
we can readily present the proposed stepsize schedule.

Choose some positive quantity $c\geq 1$. While we shall keep $c$ as a general quantity throughout most of the proof, it will be taken to be $c=\log_2\rho$ at the last step of our proof of the main theorems.  
Also, set
\begin{equation}
k_0=M_0=0,\qquad 
k_i = \left\lfloor2\cdot 2^{c i} \right\rfloor,
\qquad \text{and}\qquad 
M_i = \sum_{j=1}^i k_i,
\qquad i=1,2,\dots
\end{equation}
%
%
With these parameters in place, 
our construction proceeds as follows:
\begin{itemize}
    \item For each $j\geq 1$, set $\bm{s}_i=\overline{\bm{s}}_j$ for every $i$ obeying $M_{j-1}<i \leq M_{j} $, 
    where $\overline{\bm{s}}_j$ denotes the $j$-th order silver stepsize schedule in Definition~\ref{def:silver}. 
    In other words,  we repeat $\overline{\bm{s}}_j$ for $k_j$ times for each $j\geq 1$, with $k_j$ exponentially increasing in $j$.

    \item Generate the infinite stepsize sequence $\widehat{\bm{s}}$ through the concatenation procedure in \eqref{eq:defn-s-hat}. 
    
\end{itemize}
Throughout the rest of the paper, we denote by $t_i$ the length of 
the $i$-th order subsequence $\widehat{\bm{s}}_i$, as constructed in \eqref{eq:defn-s-hat}. 




We immediately single out an important property of the constructed stepsize schedule $\widehat{\bm{s}}$. 
The proof is postponed to Appendix~\ref{sec:proof-lem-sequence-computation}.

\begin{lemma}\label{lemma:sequence_computation}
For any $t\geq 1$, it holds that 
\begin{equation}
\label{eq:At-lower-bound}
A_{t+1}(\widehat{\bm{s}})\geq \frac{1}{36}t^{\frac{c+\log_2{\rho}}{c+1}} ,
\end{equation}
where $A_{t}(\widehat{\bm{s}})$ is defined in \eqref{eq:defn-An-Cn-inf}. 
Moreover, letting $o_t$ denote the integer obeying $ \sum_{j=1}^{o_t-1} k_j 2^j <t\leq \sum_{j=1}^{o_t} k_j 2^j $, one has 
\begin{equation}
\label{eq:ot-upper-bound}
2^{o_t}\leq 2 t^{\frac{1}{c+1}}.
\end{equation}
\end{lemma}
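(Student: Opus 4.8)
\textbf{Proof plan for Lemma~\ref{lemma:sequence_computation}.}

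The plan is to analyze the two assertions separately, both relying on the key identities $\bm{1}^{\top}\overline{\bm{s}}_j = \rho^j - 1$ and $|\overline{\bm{s}}_j| = 2^j - 1$ from Lemma~\ref{lemma:basic_property}, together with the (super-additive) behavior of the concatenation function $\varphi$. The first step is to set up a dictionary between the ``index $t$ into the stepsize sequence $\widehat{\bm s}$'' and the ``block structure'': since $\bm s_i = \overline{\bm s}_j$ has length $2^j-1$ for $M_{j-1} < i \le M_j$, the total number of stepsizes contributed by the first $M_j$ blocks is roughly $\sum_{\ell=1}^{j} k_\ell (2^\ell - 1)$, which is comparable to $\sum_{\ell\le j} k_\ell 2^\ell \asymp 2^{(c+1)j}$ because $k_\ell \asymp 2^{c\ell}$. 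Inverting this, an index $t$ sits in (roughly) block order $o_t$ with $2^{o_t} \asymp t^{1/(c+1)}$, which is exactly \eqref{eq:ot-upper-bound}; the precise constant $2$ should fall out by being slightly generous in the floor/ceiling and the $2^\ell - 1$ vs.\ $2^\ell$ comparisons, using that $\sum_{j=1}^{o_t} k_j 2^j \ge k_{o_t} 2^{o_t} \ge 2^{(c+1)o_t}$.

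For the lower bound \eqref{eq:At-lower-bound} on $A_{t+1}(\widehat{\bm s}) = \sum_{i=1}^{t}\alpha_i(\widehat{\bm s})$, the idea is that the sum of \emph{all} stepsizes in $\widehat{\bm s}$ is at least the sum of the stepsizes coming from the silver subsequences themselves (since every $\varphi$-generated ``glue'' stepsize is nonnegative). So $A_{t+1}(\widehat{\bm s})$ is bounded below by the sum of $\bm 1^{\top}\overline{\bm s}_j = \rho^j - 1$ over all complete copies of $\overline{\bm s}_j$ that fit within the first $t$ entries. Using the block-counting from the previous paragraph, at least (a constant fraction of) the copies of $\overline{\bm s}_{o_t - 1}$ — or some block of order $\Theta(o_t)$ — are fully contained, so $A_{t+1}(\widehat{\bm s}) \gtrsim k_{o_t} \rho^{o_t} \asymp 2^{c\,o_t}\rho^{o_t} = 2^{(c + \log_2\rho)o_t}$. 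Combined with $2^{o_t}\asymp t^{1/(c+1)}$ this gives $A_{t+1}(\widehat{\bm s}) \gtrsim t^{(c+\log_2\rho)/(c+1)}$; tracking the constants (the floors in $k_i$, the ``$-1$'' terms, the fraction of a block that might be cut off, and the factor in \eqref{eq:ot-upper-bound}) should yield the explicit $1/36$. I would handle the small-$t$ regime (where $o_t$ is $0$ or $1$ and the asymptotics are vacuous) separately, noting $A_{t+1}(\widehat{\bm s}) \ge$ some absolute constant times $t$ there, which dominates $t^{(c+\log_2\rho)/(c+1)}$ for small $t$ since that exponent exceeds $1$ — actually one must be careful here: $(c+\log_2\rho)/(c+1) > 1$ iff $\log_2\rho > 1$, which holds, so the worry is really the \emph{opposite} regime, and instead I would just verify the claimed bound holds with the stated constant for all $t \ge 1$ by a crude estimate, e.g.\ $A_2(\widehat{\bm s}) = \alpha_1 \ge$ the first silver stepsize, which is at least $1$.

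The main obstacle I anticipate is bookkeeping rather than conceptual: keeping the chain of inequalities $t \leftrightarrow M_j \leftrightarrow o_t \leftrightarrow 2^{o_t} \leftrightarrow A_{t+1}$ tight enough that all the lossy steps (floors in $k_i = \lfloor 2\cdot 2^{ci}\rfloor$, the gap between $2^j$ and $2^j - 1$, the possibly-incomplete final block, and summing a geometric-type series in $j$ where the last term dominates only up to a constant depending on $\rho$) together cost no more than the factor $36$. A secondary subtlety is that the relevant ``order'' appearing in the $A_{t+1}$ bound might be $o_t$ or $o_t - 1$ or $o_t$-minus-a-constant depending on how much of the top block is used; I would resolve this by lower-bounding using a block of order, say, $\lceil o_t / 2\rceil$ or simply absorbing the shift into the constant, since a constant shift in the exponent's argument only changes $A_{t+1}$ by a constant factor (as $\rho$ and $2^c$ are fixed) — this is where most of the ``room'' in the constant $36$ presumably goes.
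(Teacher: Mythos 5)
Your plan follows the paper's proof closely: lower-bound $A_{t+1}(\widehat{\bm s})$ by the aggregate silver stepsizes $\rho^j-1$ of the completed sub-blocks (the $\varphi$-join steps are nonnegative and can be dropped), tie $t$ to $2^{(c+1)o_t}$ via the geometric sums $\sum_j k_j 2^j$, and handle small $o_t$ by a direct check. The paper is a bit tighter in that it also introduces an integer $m$ counting completed copies inside the $o_t$-th block and case-splits on whether $m 2^{o_t}$ dominates $\sum_{j<o_t}k_j 2^j$, whereas you rely only on blocks of order below $o_t$; this coarser estimate still clears the constant $1/36$, so the difference is cosmetic. One small slip to fix: for \eqref{eq:ot-upper-bound} the quantity you need to lower-bound is $t > \sum_{j=1}^{o_t-1}k_j 2^j \ge 2^{(c+1)(o_t-1)}$ (not $\sum_{j=1}^{o_t}$, which is an upper, not lower, bound on $t$), from which $2^{o_t}\le 2t^{1/(c+1)}$ follows immediately.
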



\subsection{A glimpse of high-level ideas}

Let us take a moment to briefly point out two key aspects underlying our design and analysis of the stepsize schedule.

\paragraph{Stepsize concatenation via suitable join steps.}
As proven recently by \citet{zhang2024accelerated,grimmer2024composing}, certain desirable stepsize schedules with different lengths can be concatenated --- with a properly chosen join stepsize --- into a longer stepsize schedule while ensuring fast convergence at the last step, which motivates our design.  
To be more concrete, a desirable stepsize schedule of this kind is the primitive stepsize schedule, and it has been shown that a primitive stepsize schedule with length $t$ enjoys the convergence rate of $O\big(\frac{1}{\sum_{i:i<t} \alpha_i}\big)$ at the last step \citep{zhang2024accelerated}. 
As a result, if we recursively prolong the stepsize schedule by concatenating the current one with another primitive stepsize schedule, then 
the $O\big(\frac{1}{\sum_{i<t}\alpha_i}\big)$ convergence rate continues to hold at the last step. 
Notably, every concatenation operation requires inserting a join stepsize in the middle, which we illustrate in  Figure~\ref{fig1}. 
%
%
As it turns out, there is a trade-off between the aggregate stepsize $\sum_{i: i<t}\alpha_i$ and the number of join steps, making it crucial to choose a proper number of join steps. Fortunately, there exists some simple stepsize schedule with $\Omega(t^{1-\epsilon_1})$ join steps and an aggregate stepsize $\Omega(t^{1+\epsilon_2})$ for some proper constants $\epsilon_1,\epsilon_2>0$, 
which enables a convergence rate  of $o(t^{-1})$ at each join step.

 \begin{figure}
  \caption{ \textbf{Left:} the first 128 steps of the silver stepsize schedule; \textbf{Right:} the first 128 steps of our stepsize schedule (with parameter $c$ adjusted for better illustration). 
  The \textbf{red bars} indicate the positions of the join steps.  
   The number of join steps in the first $t$ steps of the silver stepsize schedule is $\left\lfloor\log_2 t\right\rfloor$, whereas in our schedule, this number is roughly $\Omega(t^{\frac{\log_2\rho}{\log_2\rho+1}})$. \label{fig1}
  }
     \minipage{0.52\textwidth}
  \includegraphics[width=\linewidth]{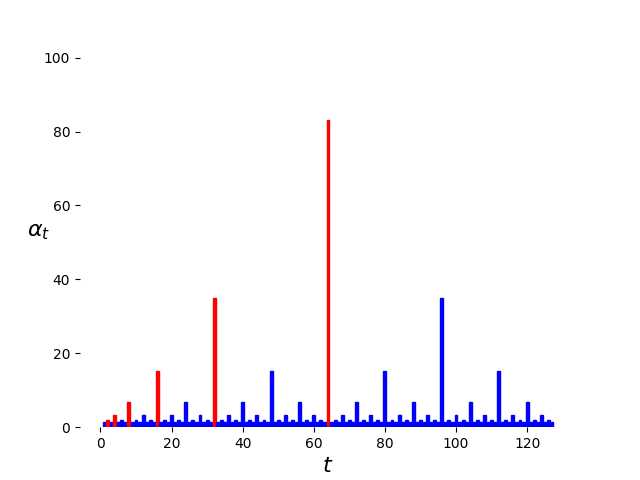}
\endminipage\hfill
 \minipage{0.52\textwidth}
  \includegraphics[width=\linewidth]{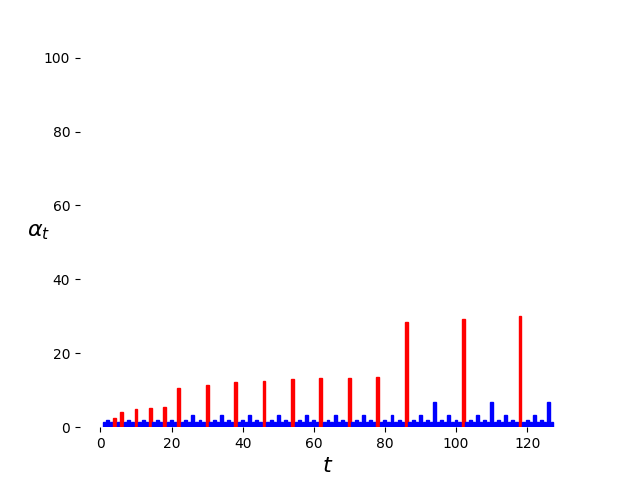}
\endminipage\hfill
 \end{figure}

\paragraph{Controlling the norm of gradients.}
While the above-mentioned concatenation strategy guarantees fast convergence at each join step, 
we still need to examine the convergence properties at intermediate steps (i.e., the ones between two adjacent join steps). 
Consider, for concreteness, iteration $\ell$, and denote by  $n_{\ell}$ the iteration number of the closest join step below $\ell$; see Figure~\ref{fig2} for an illustration. 
A common strategy to bound the difference $f_{\ell}-f_{n_{\ell}}$ of the associated objective values is to control the norm of the weighted gradients $\alpha_i^2 \|\bm{g}_i\|^2$ for every $i\in [n_{\ell}, \ell]$, which arises from the smoothness and convexity of $f$. A key part of our analysis thus boils down to bounding 
each $\alpha^2_i \|\bm{g}_i\|^2$ using the corresponding  weighted gradient at the join step $n_{\ell}$, 
for which the silver stepsize schedule enjoys some favorable property that enables effective control of the weighted gradient norm in this manner. 




\begin{figure}\label{fig2}
\caption{An illustration of our analysis strategy to bound $f_{\ell}-f^*$ for an intermediate step $\ell$. Here, the \textbf{yellow point} indicates the initial step, whereas the \textbf{red points} indicate the join steps. Here, $n_{\ell}$ indicates the largest join step below $\ell$.}
\includegraphics[width=\linewidth]{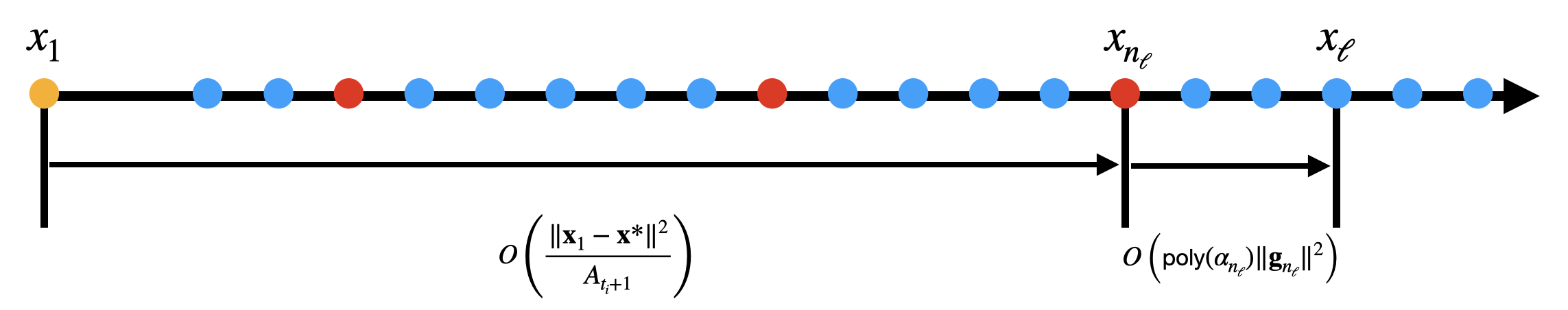}
\end{figure}

\subsection{Key lemmas}
Before proceeding to the proof of our main theorem, we single out a couple of key lemmas concerning the primitive stepsize schedule --- and in particular, the silver stepsize schedule --- that play an important role in our subsequent analysis.

The first lemma below singles out 
an important property of a primitive stepsize schedules, to be specified by \eqref{eq:rs1}. 

\begin{lemma}\label{lemma:key}
Suppose $\bm{s}=\bm{\alpha}_{1:k-1}$ is a primitive stepsize schedule. Then for any fixed $\bm{x}_0$ with gradient $\bm{g}_0$, it holds that
\begin{align}
&  A_{k} (f_{k}-f_0) + \frac{1}{2}\|\bm{x}_{k}-\bm{x}_0\|^2 + C_{k} \|\bm{g}_{k}\|^2\leq \frac{1}{2}\|\bm{x}_1-\bm{x}_0\|^2 + \sum_{i=1}^{k-1}\alpha_i \langle\bm{g}_i, \bm{g}_0\rangle - \frac{A_{k}}{2}\|\bm{g}_0\|^2;\label{eq:rs1}
\end{align}
\end{lemma}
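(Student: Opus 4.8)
\textbf{Proof proposal for Lemma~\ref{lemma:key}.}

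The plan is to reduce the claimed inequality \eqref{eq:rs1} to the defining inequality \eqref{eq:defn-primitive-1} of a primitive stepsize schedule by a change of reference point. The defining inequality of primitivity is phrased with respect to the true minimizer $\bm{x}^*$, whereas here we want an analogous statement centered at an arbitrary point $\bm{x}_0$ with its gradient $\bm{g}_0 = \nabla f(\bm{x}_0)$. The natural device is to apply the primitivity inequality not to $f$ directly but to the ``tilted'' function $h(\bm{x}) \coloneqq f(\bm{x}) - \langle \bm{g}_0, \bm{x}\rangle$, which is still $1$-smooth and convex, has the same gradient map up to the shift $\nabla h(\bm{x}) = \nabla f(\bm{x}) - \bm{g}_0$, and is minimized exactly at $\bm{x}_0$ (since $\nabla h(\bm{x}_0) = \bm{0}$). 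Crucially, running GD on $h$ with the stepsize schedule $\bm{s}$ produces iterates that differ from the GD iterates on $f$ — so a direct black-box substitution will not work, and instead I would re-derive the primitivity chain with $\bm{g}_0$ subtracted throughout. Concretely, I would revisit the proof of Lemma~\ref{lemma:key1} / Definition~\ref{defn:primitive} (i.e.\ the weighted sum of the basic inequalities \eqref{eq:basic-inequalities-f}) and carry out the same aggregation, but replacing each occurrence of $f^*$, $\bm{x}^*$ by $f_0$, $\bm{x}_0$ and each gradient $\bm{g}_i$ appearing in an inner product against the ``reference'' by $\bm{g}_i - \bm{g}_0$ where appropriate.

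More explicitly, here is the sequence of steps I expect to carry out. First, recall that primitivity of $\bm{s}$ means
\begin{align}
A_k(f_k - f^*) + C_k\|\bm{g}_k\|^2 + \tfrac12\|\bm{x}_k - \bm{x}^*\|^2 \leq \tfrac12\|\bm{x}_1 - \bm{x}^*\|^2 + \sum_{i=1}^{k-1}\alpha_i\big(f_i - f^* - \langle \bm{g}_i, \bm{x}_i - \bm{x}^*\rangle + \tfrac12\|\bm{g}_i\|^2\big). \nonumber
\end{align}
The key observation is that this chain of inequalities was obtained purely by summing the first inequality in \eqref{eq:basic-inequalities-f}, namely $f_i - f^* - \langle \bm{g}_i, \bm{x}_i - \bm{x}^*\rangle + \tfrac12\|\bm{g}_i\|^2 \le 0$, against nonnegative multipliers together with telescoping identities that use only the GD update rule $\bm{x}_{i+1} = \bm{x}_i - \alpha_i\bm{g}_i$ — none of which references the special role of $\bm{x}^*$ as a minimizer. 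Hence I would argue that the \emph{same} algebraic identity holds verbatim with $(\bm{x}^*, f^*, \bm{0})$ replaced by $(\bm{x}_0, f_0, \bm{g}_0)$ provided we also replace the interpolation inequality $f_i - f^* - \langle\bm{g}_i,\bm{x}_i-\bm{x}^*\rangle + \tfrac12\|\bm{g}_i\|^2 \le 0$ by its shifted counterpart $f_i - f_0 - \langle \bm{g}_i - \bm{g}_0, \bm{x}_i - \bm{x}_0\rangle + \tfrac12\|\bm{g}_i - \bm{g}_0\|^2 \le 0$ (which is exactly $1$-smoothness plus convexity of $f$ between $\bm{x}_i$ and $\bm{x}_0$, i.e.\ inequality (c) or the tilted version of (a) in \eqref{eq:basic-inequalities-f}). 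Running the bookkeeping through, the left side becomes $A_k(f_k - f_0) + C_k\|\bm{g}_k - \bm{g}_0\|^2 + \tfrac12\|\bm{x}_k - \bm{x}_0\|^2$ and the right side $\tfrac12\|\bm{x}_1 - \bm{x}_0\|^2 + \sum_{i} \alpha_i\big(f_i - f_0 - \langle \bm{g}_i - \bm{g}_0, \bm{x}_i - \bm{x}_0\rangle + \tfrac12\|\bm{g}_i - \bm{g}_0\|^2\big) \le \tfrac12\|\bm{x}_1 - \bm{x}_0\|^2$.

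The remaining work is then to massage this shifted inequality into the exact form \eqref{eq:rs1}. The left-hand side of \eqref{eq:rs1} has $C_k\|\bm{g}_k\|^2$ rather than $C_k\|\bm{g}_k - \bm{g}_0\|^2$, and the right-hand side has $\sum_i \alpha_i\langle\bm{g}_i,\bm{g}_0\rangle - \tfrac{A_k}{2}\|\bm{g}_0\|^2$ in place of the tilted sum. So I would expand all the squared norms and inner products involving $\bm{g}_0$: write $\|\bm{g}_k - \bm{g}_0\|^2 = \|\bm{g}_k\|^2 - 2\langle\bm{g}_k,\bm{g}_0\rangle + \|\bm{g}_0\|^2$, expand $\langle\bm{g}_i - \bm{g}_0, \bm{x}_i - \bm{x}_0\rangle$, and use the GD-update identity $\bm{x}_i - \bm{x}_0 = \bm{x}_1 - \bm{x}_0 - \sum_{j=1}^{i-1}\alpha_j\bm{g}_j$ (valid for $i\ge1$) to convert position differences into gradient sums. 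One then collects all the $\bm{g}_0$-dependent terms; I expect the cross terms in $\langle\bm{g}_k,\bm{g}_0\rangle$ to cancel against corresponding terms from the expansion of the sum, and the coefficient of $\|\bm{g}_0\|^2$ to consolidate to exactly $-\tfrac{A_k}{2}$ after using $C_k = \tfrac{A_k(A_k+1)}{2}$ and $A_k = \sum_{i=1}^{k-1}\alpha_i$. The main obstacle is precisely this last bookkeeping step — tracking the $\bm{g}_0$ cross terms and $\|\bm{g}_0\|^2$ coefficient without error — since it requires carefully expanding the telescoped position differences and matching the combinatorial coefficients $A_k$, $C_k$; a clean way to organize it is to define the auxiliary function $h = f - \langle\bm{g}_0,\cdot\rangle$, note that the ``$h$-gradients'' are $\bm{g}_i - \bm{g}_0$ and that the $h$-iterates under the same stepsizes coincide (shifted) with those needed, apply Definition~\ref{defn:primitive} / the primitivity hypothesis to $h$ verbatim, and only then re-expand back in terms of $f$'s quantities.
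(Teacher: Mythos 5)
Your proposal hinges on transferring the primitivity inequality from the reference point $\bm{x}^*$ to $\bm{x}_0$, either by tilting $f$ to $h = f - \langle \bm{g}_0, \cdot\rangle$ and invoking primitivity for $h$, or by ``re-deriving'' the primitivity chain with $\bm{g}_0$ subtracted throughout. Neither route is sound as stated, and you in fact flag the first issue yourself (``running GD on $h$ with the stepsize schedule $\bm{s}$ produces iterates that differ from the GD iterates on $f$'') only to contradict it in the final paragraph (``the $h$-iterates under the same stepsizes coincide (shifted) with those needed''). That last claim is false: if $\bm{y}_1=\bm{x}_1$ and $\bm{y}_{t+1}=\bm{y}_t - \alpha_t\nabla h(\bm{y}_t)$, then $\bm{y}_2=\bm{x}_2+\alpha_1\bm{g}_0$ but $\bm{y}_3 = \bm{y}_2 - \alpha_2\big(\nabla f(\bm{x}_2+\alpha_1\bm{g}_0)-\bm{g}_0\big)$ is \emph{not} a translate of $\bm{x}_3$, because $\nabla f$ is now evaluated at a perturbed point. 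So you cannot evaluate Definition~\ref{defn:primitive} on $h$ and read off a statement about the $f$-trajectory $\bm{x}_1,\dots,\bm{x}_k$.

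The ``re-derive the chain'' alternative rests on a misconception: you assert that the primitivity inequality \eqref{eq:defn-primitive-1} ``was obtained purely by summing the first inequality in \eqref{eq:basic-inequalities-f} against nonnegative multipliers together with telescoping identities.'' That is true only for the \emph{second} inequality in Definition~\ref{defn:primitive}, namely \eqref{eq:defn-primitive}; the first inequality \eqref{eq:defn-primitive-1} is a nontrivial, schedule-dependent statement and is precisely what distinguishes a primitive schedule. The lemma's hypothesis tells you only that \eqref{eq:defn-primitive-1} holds, not how it was established, so there is no ``same aggregation'' to rerun with $\bm{x}^*$ replaced by $\bm{x}_0$. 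The paper's actual proof sidesteps all of this: it treats \eqref{eq:defn-primitive-1} as a black box, observes separately that $\sum_{i=1}^{k-1}\alpha_i\big(f_i - f_0 - \langle\bm{g}_i,\bm{x}_i-\bm{x}_0\rangle + \tfrac12\|\bm{g}_i-\bm{g}_0\|^2\big)\le 0$ (smoothness plus convexity between $\bm{x}_i$ and $\bm{x}_0$), subtracts this nonpositive quantity from the right-hand side of \eqref{eq:defn-primitive-1}, and then uses $\sum_{i=1}^{k-1}\alpha_i\bm{g}_i = \bm{x}_1-\bm{x}_k$ together with the algebraic identity $\tfrac12\|\bm{x}_1-\bm{x}^*\|^2-\tfrac12\|\bm{x}_k-\bm{x}^*\|^2-\langle\bm{x}_1-\bm{x}_k,\bm{x}_0-\bm{x}^*\rangle=\tfrac12\|\bm{x}_1-\bm{x}_0\|^2-\tfrac12\|\bm{x}_k-\bm{x}_0\|^2$ to trade $\bm{x}^*$ for $\bm{x}_0$. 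This is a one-shot manipulation requiring neither re-derivation nor any tilt of $f$, and it is the route you should take.
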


\begin{proof}
 From  Definition~\ref{defn:primitive} of the primitive stepsize schedule, we obtain
\begin{align}
& A_k(f_k-f^*) + C_k \|\bm{g}_k\|^2  + \frac{1}{2}\|\bm{x}_k-\bm{x}^*\|^2 \nonumber
\\ & \leq \quad \frac{1}{2}\|\bm{x}_1-\bm{x}^*\|^2 +\sum_{i=1}^{k-1}\alpha_i \left( f_i -f^* -\left\langle \bm{g}_i, \bm{x}_i-\bm{x}^* \right\rangle +\frac{1}{2}\|\bm{g}_i\|^2 \right).\label{eq:qr1}
\end{align}
Also, the basic properties about smooth convex functions (cf.~\eqref{eq:basic-inequalities-f}) give
\begin{align}
\sum_{i=1}^{k-1} \alpha_i \left( f_i-f_0 -\left\langle \bm{g}_i, \bm{x}_i -\bm{x}_0 \right\rangle + \frac{1}{2}\|\bm{g}_i-\bm{g}_0\|^2  \right) \leq 0 ,\label{eq:qr2}
\end{align}
which further implies that
\begin{align}
 & \sum_{i=1}^{k-1}\alpha_i \left( f_i -f^* -\left\langle \bm{g}_i, \bm{x}_i-\bm{x}^* \right\rangle +\frac{1}{2}\|\bm{g}_i\|^2 \right) \nonumber
 \\ & \leq  \sum_{i=1}^{k-1}\alpha_i \left( f_i -f^* -\left\langle \bm{g}_i, \bm{x}_i-\bm{x}^* \right\rangle +\frac{1}{2}\|\bm{g}_i\|^2 \right)  - \sum_{i=1}^{k-1} \alpha_i \left( f_i-f_0 -\left\langle \bm{g}_i, \bm{x}_i -\bm{x}_0 \right\rangle + \frac{1}{2}\|\bm{g}_i-\bm{g}_0\|^2  \right)\nonumber
 \\ & = \sum_{i=1}^{k-1}\alpha_i \left(      f_0-f^* - \left\langle  \bm{g}_i ,\bm{x}_0-\bm{x}^* \right\rangle + \left\langle \bm{g}_i,\bm{g}_0 \right\rangle - \frac{1}{2}\|\bm{g}_0\|^2 \right).\label{eq:qr3}
\end{align}
Substituting \eqref{eq:qr3} into \eqref{eq:qr1}  and using the fact that  $\sum_{i=1}^{k-1}\alpha_i \bm{g}_i = \bm{x}_1-\bm{x}_k$, we can derive
\begin{align}
& A_k(f_k-f^*) + C_k \|\bm{g}_k\|^2  + \frac{1}{2}\|\bm{x}_k-\bm{x}^*\|^2 \nonumber
\\ & \leq  \frac{1}{2}\|\bm{x}_1-\bm{x}^*\|^2 +  \sum_{i=1}^{k-1}\alpha_i \left(      f_0-f^* - \left\langle  \bm{g}_i ,\bm{x}_0-\bm{x}^* \right\rangle + \left\langle \bm{g}_i,\bm{g}_0 \right\rangle - \frac{1}{2}\|\bm{g}_0\|^2 \right)\nonumber
\\ & = \frac{1}{2}\|\bm{x}_1-\bm{x}^*\|^2 +A_k(f_0-f^*) - \sum_{i=1}^{k-1}\alpha_i\left\langle \bm{g}_i, \bm{x}_0-\bm{x}^* \right\rangle + \sum_{i=1}^{k-1}\alpha_i\left\langle \bm{g}_i,\bm{g}_0 \right\rangle - \frac{A_k}{2}\|\bm{g}_0\|^2 \nonumber
\\ & =\frac{1}{2}\|\bm{x}_1-\bm{x}^*\|^2 + A_k(f_0 -f^*) - \left\langle \bm{x}_1-\bm{x}_k ,\bm{x}_0-\bm{x}^* \right \rangle +  \sum_{i=1}^{k-1}\alpha_i\left\langle \bm{g}_i,\bm{g}_0 \right\rangle - \frac{A_k}{2}\|\bm{g}_0\|^2 .
\label{eq:Ak-ub1}
\end{align}

To continue, we make the observation that
\begin{align}
\frac{1}{2}\|\bm{x}_1-\bm{x}^*\|^2 - \frac{1}{2}\|\bm{x}_k-\bm{x}^*\|^2  -\left\langle \bm{x}_1-\bm{x}_k ,\bm{x}_0-\bm{x}^* \right \rangle 
&  
=\frac{1}{2}\|\bm{x}_1\|^2  - \frac{1}{2}\|\bm{x}_k\|^2 - \left\langle \bm{x}_1,\bm{x}_0  \right\rangle + \left\langle \bm{x}_k,\bm{x}_0  \right\rangle \nonumber
\\ &  = \frac{1}{2}\|\bm{x}_1-\bm{x}_0\|^2 - \frac{1}{2}\|\bm{x}_k-\bm{x}_0\|^2,\nonumber
\end{align}
which combined with \eqref{eq:Ak-ub1} yields
\begin{align}
A_k(f_k-f_0) +C_k \|\bm{g}_k\|^2  + \frac{1}{2}\|\bm{x}_k-\bm{x}^*\|^2 \leq  \frac{1}{2}\|\bm{x}_1-\bm{x}_0\|^2 +\sum_{i=1}^{k-1}\alpha_i\left\langle \bm{g}_i,\bm{g}_0 \right\rangle - \frac{A_k}{2}\|\bm{g}_0\|^2\nonumber
\end{align}
as claimed.
\end{proof}

Furthermore, the result in Lemma~\ref{lemma:key} allows us to control the gradient norm at the last step, provided that a primitive stepsize schedule is adopted. 
%
\begin{lemma}\label{lemma:key3} 
Assume $\bm{s}=\bm{\alpha}_{1:k-1}$ is a primitive stepsize schedule. 
Assume  $\bm{x}_1 = \bm{x}_0 - \alpha_0 \bm{g}_0$. Then one has
\begin{align}
  C_k\|\bm{g}_k\|^2  &\leq \left(\frac{\alpha_0^2}{2}+\frac{(A_k+1)^2}{2}-\alpha_0 -\frac{A_k}{2} \right) \|\bm{g}_0\|^2;\label{eq:we1}
 \\f_k -f_0 &\leq \frac{1}{A_k} \left( \frac{1}{2}\alpha_0^2-\frac{A_k}{2}-\alpha_0+\frac{1}{2}\right)\|\bm{g}_0\|^2.\label{eq:we2}
\end{align}
\end{lemma}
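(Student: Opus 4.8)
\textbf{Proof plan for Lemma~\ref{lemma:key3}.}

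The plan is to specialize the inequality \eqref{eq:rs1} of Lemma~\ref{lemma:key} to the situation where $\bm{x}_1 = \bm{x}_0 - \alpha_0 \bm{g}_0$, i.e.\ the primitive schedule is run starting from the point obtained by one gradient step of size $\alpha_0$ from $\bm{x}_0$. The left-hand side of \eqref{eq:rs1} contains the nonnegative term $\frac{1}{2}\|\bm{x}_k-\bm{x}_0\|^2$, which I will simply drop to obtain an upper bound on $A_k(f_k-f_0) + C_k\|\bm{g}_k\|^2$ in terms of the right-hand side of \eqref{eq:rs1}. So the task reduces to bounding the right-hand side
\[
\frac{1}{2}\|\bm{x}_1-\bm{x}_0\|^2 + \sum_{i=1}^{k-1}\alpha_i\langle\bm{g}_i,\bm{g}_0\rangle - \frac{A_k}{2}\|\bm{g}_0\|^2
\]
purely in terms of $\|\bm{g}_0\|^2$.

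First I would handle $\|\bm{x}_1-\bm{x}_0\|^2 = \alpha_0^2\|\bm{g}_0\|^2$, which is immediate. The main work is the cross term $\sum_{i=1}^{k-1}\alpha_i\langle\bm{g}_i,\bm{g}_0\rangle$. Here I would use $\sum_{i=1}^{k-1}\alpha_i\bm{g}_i = \bm{x}_1 - \bm{x}_k$ (the GD telescoping identity already invoked in the proof of Lemma~\ref{lemma:key}), so that $\sum_{i=1}^{k-1}\alpha_i\langle\bm{g}_i,\bm{g}_0\rangle = \langle\bm{x}_1-\bm{x}_k,\bm{g}_0\rangle$. Since $\bm{x}_1-\bm{x}_0 = -\alpha_0\bm{g}_0$, this equals $-\alpha_0\|\bm{g}_0\|^2 + \langle\bm{x}_0-\bm{x}_k,\bm{g}_0\rangle$. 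The remaining piece $\langle\bm{x}_0-\bm{x}_k,\bm{g}_0\rangle$ must be controlled; the natural route is to apply the convexity/smoothness inequality relating $f_0$, $f_k$, $\bm{g}_0$, $\bm{g}_k$ — specifically the inequality of the type $f_0 - f_k - \langle\bm{g}_0,\bm{x}_0-\bm{x}_k\rangle + \frac12\|\bm{g}_0-\bm{g}_k\|^2\le 0$ from \eqref{eq:basic-inequalities-f} — to turn $\langle\bm{x}_0-\bm{x}_k,\bm{g}_0\rangle$ into $f_0 - f_k + \frac12\|\bm{g}_0-\bm{g}_k\|^2$ (up to sign), and then reabsorb the $f_k - f_0$ and $\|\bm{g}_k\|^2$ contributions into the left-hand side $A_k(f_k-f_0) + C_k\|\bm{g}_k\|^2$. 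I expect that after collecting terms one gets an inequality of the shape $A_k(f_k-f_0) + (\text{something})\|\bm{g}_k\|^2 \le (\text{quadratic in }\alpha_0, A_k)\|\bm{g}_0\|^2$, and then using $C_k = \frac{A_k(A_k+1)}{2}$ together with $(A_k+1)^2 = A_k^2 + 2A_k + 1$ I would match it to the claimed coefficients $\frac{\alpha_0^2}{2} + \frac{(A_k+1)^2}{2} - \alpha_0 - \frac{A_k}{2}$ and $\frac{1}{A_k}\big(\frac12\alpha_0^2 - \frac{A_k}{2} - \alpha_0 + \frac12\big)$. The two bounds \eqref{eq:we1} and \eqref{eq:we2} should then follow by separately discarding the $f_k-f_0$ term (keeping $C_k\|\bm{g}_k\|^2$) for \eqref{eq:we1}, and discarding $C_k\|\bm{g}_k\|^2$ and dividing by $A_k$ for \eqref{eq:we2}.

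The main obstacle I anticipate is the careful bookkeeping in the cross term: ensuring that the $\|\bm{g}_k\|^2$ and $\langle\bm{g}_0,\bm{g}_k\rangle$ terms generated when rewriting $\langle\bm{x}_0-\bm{x}_k,\bm{g}_0\rangle$ can be absorbed cleanly, and getting the coefficient of $\|\bm{g}_k\|^2$ on the left to come out to exactly $C_k$ rather than something smaller. One subtlety is whether a cross term $\langle\bm{g}_0,\bm{g}_k\rangle$ survives; if it does, I would bound it via Cauchy–Schwarz or Young's inequality, which may cost a bit in the constant, so I would want to check whether the identity $\sum_i\alpha_i\bm{g}_i = \bm{x}_1-\bm{x}_k$ combined with the primitive-schedule inequality is tight enough to avoid that loss — likely the intended argument routes everything through Lemma~\ref{lemma:key} exactly so that no lossy step is needed. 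A secondary check is the edge case $k=1$ (null schedule, $A_k = 0$): \eqref{eq:we1} should hold trivially since both sides involve $C_1 = 0$, while \eqref{eq:we2} is to be read as vacuous or interpreted in the limit, and I would note this explicitly.
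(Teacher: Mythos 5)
Your overall route through Lemma~\ref{lemma:key} is exactly the paper's, and the identities you isolate ($\|\bm{x}_1-\bm{x}_0\|^2=\alpha_0^2\|\bm{g}_0\|^2$, $\sum_i\alpha_i\bm{g}_i=\bm{x}_1-\bm{x}_k$, and the cocoercivity inequality linking $f_0,f_k,\bm{g}_0,\bm{g}_k$) are the right ingredients. But there is a genuine gap in the very first move: you propose to \emph{drop} the term $\tfrac12\|\bm{x}_k-\bm{x}_0\|^2$ from the left-hand side of \eqref{eq:rs1} at the outset, and that term is indispensable. After rewriting the cross term and using the cocoercivity bound $A_k(f_k-f_0)\geq -A_k\langle\bm{g}_0,\bm{x}_0-\bm{x}_k\rangle+\tfrac{A_k}{2}\|\bm{g}_0-\bm{g}_k\|^2$, you are left needing an upper bound on $(A_k+1)\langle\bm{x}_0-\bm{x}_k,\bm{g}_0\rangle$. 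There is no bound on this inner product purely in terms of $\|\bm{g}_0\|^2$: the only workable estimate is Young's inequality
\[
(A_k+1)\langle\bm{x}_0-\bm{x}_k,\bm{g}_0\rangle \leq \tfrac12\|\bm{x}_k-\bm{x}_0\|^2 + \tfrac{(A_k+1)^2}{2}\|\bm{g}_0\|^2,
\]
and the $\tfrac12\|\bm{x}_k-\bm{x}_0\|^2$ it produces must cancel against the $\tfrac12\|\bm{x}_k-\bm{x}_0\|^2$ you discarded. Having thrown it away, the inequality cannot be closed, since $\|\bm{x}_k-\bm{x}_0\|$ is not controlled by $\|\bm{g}_0\|$ in general. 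The same issue arises for \eqref{eq:we2}: the paper retains $\tfrac12\|\bm{x}_k-\bm{x}_0\|^2$ on the left, bounds the single $\langle\bm{x}_0-\bm{x}_k,\bm{g}_0\rangle\leq\tfrac12\|\bm{x}_k-\bm{x}_0\|^2+\tfrac12\|\bm{g}_0\|^2$, and lets the two $\tfrac12\|\bm{x}_k-\bm{x}_0\|^2$ cancel. So the fix is simple: do not discard $\tfrac12\|\bm{x}_k-\bm{x}_0\|^2$; instead, the quantities to discard are $\tfrac{A_k}{2}\|\bm{g}_0-\bm{g}_k\|^2\geq 0$ (for \eqref{eq:we1}) and $C_k\|\bm{g}_k\|^2\geq 0$ (for \eqref{eq:we2}), after the $\|\bm{x}_k-\bm{x}_0\|^2$ terms have cancelled. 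Your instinct that Young's inequality "may cost a bit in the constant" was pointing at the real issue, but the resolution is not a constant loss — it is that the cancellation partner must be kept.
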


\begin{proof} 
  Because $\bm{\alpha}_{1:k-1}$ is a primitive stepsize schedule,  it follows from Lemma~\ref{lemma:key}  that 
\begin{align}
A_{k}(f_{k}-f_0) + \frac{1}{2}\|\bm{x}_k-\bm{x}_0\|^2 +C_k \|\bm{g}_{k}\|^2 \leq \frac{1}{2}\|\bm{x}_1-\bm{x}_0\|^2 +\sum_{i=1}^{k-1}\alpha_i \left \langle \bm{g}_i, \bm{g}_0 \right\rangle - \frac{A_k}{2}\|\bm{g}_0\|^2.
\label{eq:13579}
\end{align}
We also make note of the following basic facts:
\begin{subequations}
\begin{align}
  \bm{x}_1 &=  \bm{x}_0 - \alpha_0 \bm{g}_0; \label{eq:properties1}
 \\ \sum_{i=1}^{k-1}\alpha_i \bm{g}_i &= \bm{x}_1 -\bm{x}_k = \bm{x}_0 - \bm{x}_k - \alpha_0 \bm{g}_0;
 \label{eq:properties2}
  \\ f_0 -f_k &\leq \left\langle \bm{g}_0, \bm{x}_0-\bm{x}_k\right\rangle - \frac{1}{2}\|\bm{g}_0-\bm{g}_k\|^2 ;
   \\ (A_k+1)\left\langle \bm{g}_0, \bm{x}_0-\bm{x}_k\right\rangle &\leq \frac{1}{2}\|\bm{x}_k-\bm{x}_0\|^2 + \frac{(A_k+1)^2}{2}\|\bm{g}_0\|^2.
\end{align}
\end{subequations}
Putting the above inequalities together, we arrive at
\begin{align}
C_k\|\bm{g}_k\|^2 \leq \left(\frac{\alpha_0^2}{2}+\frac{(A_k+1)^2}{2}-\alpha_0 -\frac{A_k}{2} \right) \|\bm{g}_0\|^2.\nonumber
\end{align}
 \eqref{eq:we1} is proven.

 To prove \eqref{eq:we2}, it suffices to note from \eqref{eq:13579} that 
\begin{align}
A_k(f_k-f_0) + \frac{1}{2}\|\bm{x}_k-\bm{x}_0\|^2 
&\leq \frac{1}{2}\|\bm{x}_1-\bm{x}_0\|^2 +\sum_{i=1}^{k-1}\alpha_i \left \langle \bm{g}_i, \bm{g}_0 \right\rangle - \frac{A_k}{2}\|\bm{g}_0\|^2 \notag\\
&= \frac{\alpha_0^2}{2}\|\bm{g}_0\|^2 +
 \left \langle \bm{x}_0 - \bm{x}_k - \alpha_0 \bm{g}_0, \bm{g}_0 \right\rangle 
- \frac{A_k}{2}\|\bm{g}_0\|^2 \notag\\
& = \left(\frac{1}{2}\alpha_0^2 - \frac{A_k}{2}-\alpha_0 \right)\|\bm{g}_0\|^2 + \left\langle \bm{x}_0-\bm{x}_k  ,\bm{g}_0 \right\rangle \nonumber
\\ & \leq \left(\frac{1}{2}\alpha_0^2 - \frac{A_k}{2}-\alpha_0 +\frac{1}{2}\right)\|\bm{g}_0\|^2  + \frac{1}{2}\|\bm{x}_k-\bm{x}_0\|^2,
\end{align}
where the second line makes use of \eqref{eq:properties1} and \eqref{eq:properties2}, 
and  the last line results from the elementary inequality $2\langle \bm{a}, \bm{b}\rangle \leq \|\bm{a}\|^2+\|\bm{b}\|^2$. 
This concludes the proof.
\end{proof}

Additionally, the following lemma enables effective control of the gradient norms in all intermediate steps. 

\begin{lemma}\label{lemma:refine1}
Consider $i\geq 1$ and $\alpha\geq 0$, and let $k= 2^i$. 
Denote by $\overline{\bm{s}}_{i}= [\alpha_1,\dots,\alpha_{k-1}]^{\top}$ the $i$-th order silver stepsize schedule. 
Fix $\bm{x}_0$, set $\alpha_0  = \alpha$ and let $\bm{x}_1 = \bm{x}_0 - \alpha_0 \bm{g}_0$. If $\alpha \geq (\sqrt{2}-1)A_k+\sqrt{2}$, then one has 
\begin{align}
 f_{\ell}-f_0 \leq  432 \alpha^2 \|\bm{g}_0\|^2 \nonumber
\end{align}
for any $\ell$ obeying $1\leq \ell\leq k-1$.
\end{lemma}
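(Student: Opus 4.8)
The plan is to prove Lemma~\ref{lemma:refine1} by induction on $i$, mirroring the structure of the commented-out Lemma~\ref{lemma:s1} but tracking the objective-value bound directly rather than passing through the weighted-gradient-norm bound $\alpha_\ell^2\|\bm g_\ell\|^2$. For the base case $i=1$, the schedule $\overline{\bm s}_1 = [\sqrt 2]$ has only one interior step $\ell=1$, and from $\bm x_1 = \bm x_0 - \alpha\bm g_0$ together with the $1$-smoothness inequality \eqref{eq:local_error} one gets $f_1 - f_0 \leq \frac{\alpha^2+2\alpha}{2}\|\bm g_0\|^2$; since $\alpha \geq (\sqrt2-1)A_2+\sqrt2 = \sqrt2 \geq 1$, this is at most $\frac{3}{2}\alpha^2\|\bm g_0\|^2 \leq 432\alpha^2\|\bm g_0\|^2$, so the base case is slack.

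For the inductive step, assume the claim holds for order $i-1$ and consider $\overline{\bm s}_i = \mathsf{concat}(\overline{\bm s}_{i-1}, \overline{\bm s}_{i-1})$. Write $\ell^* = 2^{i-1}$, so the first block is $\bm\alpha_{1:\ell^*-1} = \overline{\bm s}_{i-1}$, the join step is $\alpha_{\ell^*} = \varphi(A_{\ell^*},A_{\ell^*}) = (\sqrt2-1)A_{\ell^*}+\sqrt2$, and the second block $\bm\alpha_{\ell^*+1:2^i-1}$ is again $\overline{\bm s}_{i-1}$. There are three cases for $\ell$. First, if $1 \leq \ell \leq \ell^*-1$, the induction hypothesis applied to the first block (with the same $\bm x_0$, $\alpha_0=\alpha$) directly gives $f_\ell - f_0 \leq 432\alpha^2\|\bm g_0\|^2$; we only need $\alpha \geq (\sqrt2-1)A_{\ell^*}+\sqrt2$, which holds since $A_{\ell^*} \leq A_k$. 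Second, the join step $\ell = \ell^*$: here I would use Lemma~\ref{lemma:key3}\eqref{eq:we2} for the primitive schedule $\overline{\bm s}_{i-1}$, giving $f_{\ell^*} - f_0 \leq \frac{1}{A_{\ell^*}}\big(\frac12\alpha^2 - \frac{A_{\ell^*}}{2} - \alpha + \frac12\big)\|\bm g_0\|^2 \leq \frac{\alpha^2}{2A_{\ell^*}}\|\bm g_0\|^2$, and since $A_{\ell^*} \geq \sqrt2$ (by Lemma~\ref{lemma:sequence_computation}, or Lemma~\ref{lemma:basic_property}) this is bounded by $\frac{\alpha^2}{2\sqrt2}\|\bm g_0\|^2 \leq 432\alpha^2\|\bm g_0\|^2$.

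The third and genuinely delicate case is $\ell^* < \ell \leq 2^i - 1$, i.e.\ an interior step of the second block. The natural move is to apply the induction hypothesis to the second block treating $\bm x_{\ell^*}$ as the new "$\bm x_0$" with join stepsize $\alpha_{\ell^*}$ as the new "$\alpha$": the hypothesis is applicable because $\alpha_{\ell^*} = (\sqrt2-1)A_{\ell^*}+\sqrt2 \geq (\sqrt2-1)A_{2^{i-1}} + \sqrt2$, which is exactly the required lower bound for the order-$(i-1)$ schedule (here I need $A_{2^{i-1}}$ computed relative to the second block, which equals $A_{\ell^*}$ by the structural self-similarity). This yields $f_\ell - f_{\ell^*} \leq 432\,\alpha_{\ell^*}^2\|\bm g_{\ell^*}\|^2$. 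Combining with case two, $f_\ell - f_0 \leq (f_\ell - f_{\ell^*}) + (f_{\ell^*} - f_0) \leq 432\alpha_{\ell^*}^2\|\bm g_{\ell^*}\|^2 + \frac{\alpha^2}{2\sqrt2}\|\bm g_0\|^2$, so everything hinges on bounding $\alpha_{\ell^*}^2\|\bm g_{\ell^*}\|^2$ by a constant multiple of $\alpha^2\|\bm g_0\|^2$. This is where I would invoke Lemma~\ref{lemma:key3}\eqref{eq:we1}: $C_{\ell^*}\|\bm g_{\ell^*}\|^2 \leq \big(\frac{\alpha^2}{2} + \frac{(A_{\ell^*}+1)^2}{2} - \alpha - \frac{A_{\ell^*}}{2}\big)\|\bm g_0\|^2 \leq \big(\frac{\alpha^2}{2} + \frac{(A_{\ell^*}+1)^2}{2}\big)\|\bm g_0\|^2$, then divide by $C_{\ell^*} = \frac{A_{\ell^*}(A_{\ell^*}+1)}{2}$ and multiply by $\alpha_{\ell^*}^2 = ((\sqrt2-1)A_{\ell^*}+\sqrt2)^2$. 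Using $\alpha_{\ell^*}/A_{\ell^*} \leq \sqrt2$ (since $A_{\ell^*}\geq\sqrt2$) and $\alpha \geq (\sqrt2-1)A_{\ell^*}+\sqrt2 = \alpha_{\ell^*}$, so $\alpha_{\ell^*}^2/\alpha^2 \leq 1$ and $(A_{\ell^*}+1)^2/\alpha^2 \leq (A_{\ell^*}+1)^2/\alpha_{\ell^*}^2$ is bounded, one gets $\alpha_{\ell^*}^2\|\bm g_{\ell^*}\|^2 \leq c_0\,\alpha^2\|\bm g_0\|^2$ for an absolute constant $c_0$ (a small number, a handful). Plugging this back gives $f_\ell - f_0 \leq (432 c_0 + \tfrac{1}{2\sqrt2})\alpha^2\|\bm g_0\|^2$, and the constant $432$ is chosen precisely so that $432 c_0 + \tfrac{1}{2\sqrt2} \leq 432$ closes the induction.

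The main obstacle is the bookkeeping in the third case: getting the constant $c_0$ in $\alpha_{\ell^*}^2\|\bm g_{\ell^*}\|^2 \leq c_0\alpha^2\|\bm g_0\|^2$ small enough (strictly less than $1$, ideally comfortably so) that the recursion $c_{i} = 432 c_0 \cdot (\text{something}) + O(1)$ does not blow up — this is exactly why the bound is stated with the seemingly arbitrary absolute constant $432$ rather than something growing with $i$, and it requires carefully using both parts of Lemma~\ref{lemma:key3}, the lower bound $A_{\ell^*}\geq\sqrt2$, and the monotonicity $A_{\ell^*}\leq A_k$. A secondary subtlety is making sure the induction hypothesis is applied to the second block with the correct "reset" of indices and partial sums — the $A$'s and $C$'s in Definition~\ref{defn:primitive} are computed within the block, and one must check that $\bm x_{\ell^*} = \bm x_{\ell^*-1+1}$ plays the role of "$\bm x_1$" correctly, i.e.\ that the GD recursion restarted at the join step matches the hypothesis's requirement $\bm x_1 = \bm x_0 - \alpha_0\bm g_0$ with $\bm x_0 \coloneqq \bm x_{\ell^*}$ being an arbitrary fixed point and $\alpha_0 \coloneqq \alpha_{\ell^*}$.
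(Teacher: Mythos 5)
Your recursive split is essentially the same decomposition the paper uses: writing $\ell = \sum_j 2^{m_j}$ with $i > m_1 > m_2 > \cdots > m_p \geq 0$ and telescoping through the join points $\tau_j = \sum_{j'\leq j} 2^{m_{j'}}$ is exactly what your top-down halving recursion visits, with the same per-hop appeal to Lemma~\ref{lemma:key3}. The gap is in closing the induction. Your step ``$432\, c_0 + \tfrac{1}{2\sqrt2} \leq 432$'' requires $c_0 < 1$, and the estimates you invoke do not yield that. Dividing \eqref{eq:we1} by $C_{\ell^*} = \tfrac{1}{2}A_{\ell^*}(A_{\ell^*}+1)$ and multiplying by $\alpha_{\ell^*}^2$ produces one contribution of roughly $\tfrac{\alpha_{\ell^*}^2}{A_{\ell^*}(A_{\ell^*}+1)}\,\alpha^2\|\bm{g}_0\|^2$, whose coefficient tends to $\sqrt2-1 \approx 0.41$ for large $A_{\ell^*}$, plus a second contribution of roughly $\alpha_{\ell^*}^2\|\bm{g}_0\|^2$. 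Bounding the latter using only $\alpha_{\ell^*}\leq\alpha$ gives $c_0 \approx (\sqrt2-1)+1 = \sqrt2 > 1$, so the recursion $\kappa_i \leq c_0\kappa_{i-1}+O(1)$ diverges and a uniform per-level constant cannot hold.

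The ingredient you are missing is the paper's inequality \eqref{eq:alpha-half}, i.e.\ $\alpha_{\ell^*}\leq\alpha/2$, which holds whenever the sub-block has silver order at least $2$ (equivalently $i\geq 3$); it follows by combining $A_k=\rho A_{\ell^*}+\sqrt2$ with $\alpha\geq(\sqrt2-1)A_k+\sqrt2$. This replaces the loose $\alpha_{\ell^*}^2/\alpha^2\leq 1$ with $\leq 1/4$ and gives $c_0\leq(\sqrt2-1)+\tfrac14 = \sqrt2-\tfrac34 \approx 0.66 < 1$. But $\alpha_{\ell^*}\leq\alpha/2$ genuinely fails when the sub-block has order $0$ or $1$, and there the per-hop ratio can exceed $1$ (the paper bounds it by $12$). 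The paper's resolution is to observe that, since $m_1>m_2>\cdots$ are strictly decreasing nonnegative integers, orders $0$ and $1$ occur at most once each along the chain; the bounded blow-up from those two exceptional hops is absorbed once and for all into the prefactor $432$, while the geometric decay $(\sqrt2-\tfrac34)^j$ controls everything else. Your proposal, with a level-uniform constant that never distinguishes large sub-blocks from small ones, cannot close; you would need either to strengthen the induction hypothesis to a $j$-dependent geometrically decaying bound as in the paper's \eqref{eq:cxxxq1}--\eqref{eq:cxxxq2}, or to prove $\alpha_{\ell^*}\leq\alpha/2$ and treat $i\leq 2$ as special base cases tracked with a sharper constant.
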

\begin{proof}
%
Denote by $\widehat{A}_j = \rho^{j}-1$ the aggregate stepsize of the $j$-th order silver stepsize schedule for $j\geq 1$ (see, e.g., Lemma~\ref{lemma:basic_property}).
Consider any $\ell \in [1,k-1]$, then there exist $1\leq p \leq i$ and $i> m_1> m_2> ...,> m_{p}\geq 0$ such that  $$\ell = \sum_{j=1}^p 2^{m_j}.$$ 
Also, take
$$
\tau_{0}=0 \qquad \text{and}\qquad \tau_{j} = \sum_{j'=1}^{j}2^{m_j},$$ 
and hence $\tau_p = \ell$.

Now, consider the stepsize schedule $\bm{\alpha}_{\tau_j:\tau_{j+1}-1} = [\alpha_i]_{\tau_j\leq i< \tau_{j+1}}$, whose length is $\tau_{j+1}-\tau_j = 2^{m_{j+1}}$. By construction,  we know that $\bm{\alpha}_{\tau_j +1:\tau_{j+1}-1}$ corresponds to the $m_{j+1}$-th order silver stepsize schedule, and  $$\alpha_{\tau_{j+1}} = (\sqrt{2}-1)\widehat{A}_{m_{j+1}}+\sqrt{2}$$ for all $j$. Combining this with the fact $\widehat{A}_j=\rho^j-1$ and the assumption that $\alpha \geq (\sqrt{2}-1)A_k + \sqrt{2}$ (recall that $\alpha=\alpha_0$)  yields 
\begin{align}
\alpha_{\tau_{j+1}}\leq \alpha_{\tau_j}/2,
\label{eq:alpha-half}
\end{align}
provided that $j\geq 0$ and $m_{j+1}\geq 2$.

Invoking Lemma~\ref{lemma:key3} with this stepsize schedule, we can demonstrate that
\begin{align}
\widehat{A}_{m_{j+1}}(\widehat{A}_{m_{j+1}}+1)\|\bm{g}_{\tau_{j+1}}\|^2 &\leq \Big( \alpha_{\tau_j}^2 -2 \alpha_{\tau_j} +  \widehat{A}_{m_{j+1}}^2 +  \widehat{A}_{m_{j+1}}  +1  \Big) \|\bm{g}_{\tau_j}\|^2;\label{eq:rkw1}
\\ f_{\tau_{j+1}}-f_{\tau_j} &\leq \frac{1}{\widehat{A}_{m_{j+1}}}\left( \frac{1}{2} \alpha_{\tau_j}^2  - \frac{ \widehat{A}_{m_{j+1}}}{2} -\alpha_{\tau_j} + 1\right) \|\bm{g}_{\tau_j}\|^2 \leq \frac{1}{2} \alpha_{\tau_j}^2 \|\bm{g}_{\tau_j}\|^2 .\label{eq:rkw2}
\end{align}
It then follows from \eqref{eq:rkw1} that 
\begin{align}
\|\bm{g}_{\tau_{j+1}}\|^2 &\leq 
\frac{\leq \big( \alpha_{\tau_j}^2  +  \widehat{A}_{m_{j+1}}^2 +  \widehat{A}_{m_{j+1}}  +1  \big) \|\bm{g}_{\tau_j}\|^2}{\widehat{A}_{m_{j+1}}(\widehat{A}_{m_{j+1}}+1)} \notag\\
&\leq \left(   \frac{\alpha_{\tau_j}^2}{\widehat{A}_{m_{j+1}}(\widehat{A}_{m_{j+1}}+1)}  + 1\right)\|\bm{g}_{\tau_j}\|^2,\label{eq:rkw3}
\end{align}
which we would like further control by dividing into two cases. 
\begin{itemize}
    \item {\em Case 1: $m_{j+1}\geq 2$.} In this case, we have  $\widehat{A}_{m_{j+1}}\geq \rho^2-1 = 2+ 2\sqrt{2}$. Observing that $\alpha_{\tau_{j+1}} = (\sqrt{2}-1)\widehat{A}_{m_{j+1}}+\sqrt{2}$ by construction, one can easily verify that $$\widehat{A}_{m_{j+1}}(\widehat{A}_{m_{j+1}}+1)\geq (\sqrt{2}+1)\alpha_{\tau_{j+1}}^2,$$ 
    which combined with \eqref{eq:rkw3} implies that
\begin{align}
\|\bm{g}_{\tau_{j+1}}\|^2 \leq \left(   \frac{\alpha_{\tau_j}^2}{\alpha_{\tau_{j+1}}^2}\cdot (\sqrt{2}-1)  + 1\right)\|\bm{g}_{\tau_j}\|^2.
\end{align}
This taken together with the property $\alpha_{\tau_{j+1}}\leq \alpha_{\tau_j}/2$ (cf.~\eqref{eq:alpha-half}) leads to
\begin{align}
\alpha_{\tau_{j+1}}^2 \|\bm{g}_{\tau_{j+1}}\|^2 \leq \left(\sqrt{2}-\frac{3}{4} \right)\alpha_{\tau_{j}}^2 \|\bm{g}_{\tau_j}\|^2.\label{eq:cxxxq1}
\end{align}

    \item  {\em Case 2: $m_{j+1}<2$.}
In this case, 
it is readily seen from \eqref{eq:rkw3} that 
\begin{align}
 \|\bm{g}_{\tau_{j+1}}\|^2 \leq \left(   \frac{\alpha_{\tau_j}^2}{\widehat{A}_{m_{j+1}}(\widehat{A}_{m_{j+1}}+1)}  + 1\right)\|\bm{g}_{\tau_j}\|^2 \leq \alpha_{\tau_j}^2 \|\bm{g}_{\tau_j}\|^2 .\nonumber
\end{align}
Moreover, we make the observation that
$$\alpha_{\tau_{j+1}} = (\sqrt{2}-1)\widehat{A}_{m_{j+1}}+\sqrt{2}\leq 
(\sqrt{2}-1) (\rho-1) +\sqrt{2}
=2,
$$ 
which allows us to reach
\begin{align}
\alpha_{\tau_{j+1}}^2  \|\bm{g}_{\tau_{j+1}}\|^2 \leq  12\alpha_{\tau_j}^2 \|\bm{g}_{\tau_j}\|^2.\label{eq:cxxxq2}
\end{align}
\end{itemize}

Putting \eqref{eq:cxxxq1} and \eqref{eq:cxxxq2} together, we can conclude that for any $j\geq 1$,
\begin{align}
\alpha_{\tau_j}^2 \|\bm{g}_{\tau_{j}}\|^2 \leq 432 \left(\sqrt{2}-\frac{3}{4}\right)^j \alpha_{\tau_0}^2 \|\bm{g}_{\tau_0}\|^2 =  432 \left(\sqrt{2}-\frac{3}{4}\right)^j \alpha^2 \|\bm{g}_{0}\|^2 . 
\end{align}
This taken collectively with \eqref{eq:rkw2} gives
\begin{align}
\notag f_{\ell} -f_0 &= \sum_{j=0}^{p-1} (f_{\tau_{j+1}}-f_{\tau_j}) \leq \frac{1}{2}\sum_{j=0}^{p-1} \alpha_{\tau_j}^2 \|\bm{g}_{\tau_j}\|^2 \\ &\leq \left(\frac{1}{2}+ 216\sum_{j\geq 1}\left(\sqrt{2}-\frac{3}{4}\right)^j \right)\alpha^2 \|\bm{g}_0\|^2\leq 432\alpha^2 \|\bm{g}_0\|^2
\end{align}
as claimed.
\end{proof}

\subsection{Proof of Theorem~\ref{thm:main}}\label{sec:proof}

We are now positioned to prove our main result in Theorem~\ref{thm:main}, 
based on the stepsize schedule $\widehat{\bm{s}}=[\alpha_i]_{i=1}^{\infty}$ constructed in Section~\ref{sec:const}. 
Let us remind the readers of several notation below. 
\begin{itemize}

    \item $t_i$: the length of the $i$-th subsequence $\widehat{\bm{s}}_i=[\alpha_1,\dots,\alpha_{t_i}]^{\top}$ (see Section~\ref{sec:const}), corresponding to the first $t_i$ stepsizes in $\widehat{\bm{s}}$. .

    \item $o_t$: the integer such that  $ \sum_{j=1}^{o_t-1} k_j 2^j <t\leq \sum_{j=1}^{o_t} k_j 2^j $. Clearly, the length of the $(i+1)$-th subsequence  (including the $(t_i+1)$-th step) is $2^{o_{t_i+1}}$,  and $t_{i+1}=t_i + 2^{o_{t_i+1}}\leq 3t_i$.

    \item $A_t$ and $C_t$: 
    $A_t = \sum_{i=1}^{t-1}\alpha_{i}$ and $C_t =  \frac{A_t (A_t+1)}{2}$, where we suppress the dependency on $\widehat{\bm{s}}$ for notational convenience.  

    \item $\alpha_j(\overline{s}_i)$: the $j$-th stepsize in the sequence $\overline{\bm{s}}_i$. 
    
    
\end{itemize}
It is also worth noting that  Lemma~\ref{lemma:sequence_computation} gives
\begin{equation}
2^{o_{t_i+1}} \leq 2 \cdot 2^{o_{t_i}} \leq 4 t_i^{\frac{1}{c+1}}.
\label{eq:2-ot-UB}
\end{equation}
%


Consider any $i\geq 1$. 
 %
 In view of Lemma~\ref{lemma:well-define}, we know that $\widehat{\bm{s}}_i$ is  primitive. 
 Given that  $\{\bm{x}_{j}\}_{j=1}^{t_i+1}$ is the GD trajectory  with stepsize schedule $\widehat{\bm{s}}_i$, we see from Definition~\ref{defn:primitive} of the primitive stepsize schedule that
 \begin{align}
A_{t_i+1}(f_{t_i+1}-f^*) + C_{t_i+1}\|\bm{g}_{t_i+1}\|^2 +\frac{1}{2}\|\bm{x}_{t_i+1}-\bm{x}^*\|^2 \leq \frac{1}{2}\|\bm{x}_1-\bm{x}^*\|^2 ,\nonumber
 \end{align}
which immediately implies that 
 \begin{subequations}
\begin{align}
      f_{t_{i}+1} -f^*  &\leq \frac{\|\bm{x}_1-\bm{x}^*\|^2}{A_{t_i+1}};\label{eq:krr0}
  \\  \|\bm{g}_{t_i+1}\|^2 &\leq \frac{\|\bm{x}_1-\bm{x}^*\|^2}{2C_{t_i+1}} \leq  \frac{\|\bm{x}_1-\bm{x}^*\|^2}{A_{t_i+1}^2} .\label{eq:krr1}
\end{align}
\end{subequations}
%

Additionally, by construction we have $\alpha_{t_i+1} = \varphi(x,y)$ due to the concatenation operation, where 
\begin{align}
 x &= 
 \sum_{j=1}^{t_i}\alpha_j \geq \frac{1}{36}t_i^{\frac{c+\log_2{\rho}}{c+1}} \qquad \qquad \quad \qquad 
 ;
 \nonumber
\\  y & 
=\sum_{j=t_i+2}^{t_{i+1}}\alpha_j  
=\sum_{j=1}^{2^{o_{t_i+1}}-1}\alpha_{j}(\overline{\bm{s}}_{o_{t_i+1}}) = \rho^{o_{t_i+1}}-1\leq 2\cdot t_i^{\frac{\log_2{\rho}}{c+1}}
.
\nonumber
\end{align}
Here, both of the inequalities above arise from Lemma~\ref{lemma:sequence_computation}. It is also easy to observe that $x\geq y$.
It then follows that
\begin{align}
\alpha_{t_i+1}=\varphi(x,y)& = \frac{-(x+y) + \sqrt{ (x+y+2)^2 + 4(x+1)(y+1)   }}{2} \nonumber
\\ & = \frac{ 4( xy+2x+2y+2 ) }{2 (x+ y  +\sqrt{  (x+y+2)^2 +4(x+1)(y+1) } ) } \nonumber
\\ & \leq  \frac{xy + 2x+2y+2}{x+y+2}\nonumber
\\ & \leq y + 2\nonumber
\\ & = \rho^{o_{t_i+1}}+1.\nonumber
\end{align}
Moreover, recognizing that
\begin{align}
\frac{\partial \varphi(x,y)}{\partial x} = \frac{1}{2}\left(-1+ \frac{x+3y+4}{\sqrt{   x^2 + (6y+8)x + 8y + 8}} \right)\geq 0\nonumber
\end{align}
for all $(x,y)\geq 0$, 
we immediately obtain 
\begin{align}
\alpha_{t_i+1}= \varphi(x,y)& 
\geq \varphi(y,y)=
(\sqrt{2}-1)y+\sqrt{2}.\nonumber
\end{align}
%
Invoking 
Lemma~\ref{lemma:refine1} over the $(i+1)$-th sub-sequence with $\alpha = \alpha_{t_i+1}\geq (\sqrt{2}-1)y +\sqrt{2} $, 
we can show, for any $\ell$ obeying $ t_i+1<\ell \leq t_{i+1}$, that
\begin{align}
f_{\ell}-f_{t_i+1}\leq 432 \alpha_{t_i+1}^2 \|\bm{g}_{t_i+1}\|^2 & \overset{\mathrm{(i)}}{\leq} O\left( \frac{\alpha_{t_i+1}^2}{A_{t_i+1}^2} \|\bm{x}_1-\bm{x}^*\|^2\right)\nonumber
\\ & \overset{\mathrm{(ii)}}{\leq} O\left(\frac{\|\bm{x}_1-\bm{x}^*\|^2   t_i^{\frac{2\log_2\rho}{c+1}}}{t_i^{\frac{2(c+\log_2\rho)}{c+1}}} \right)\nonumber
\\&  = O\left(\frac{\|\bm{x}_1-\bm{x}^*\|^2}{\ell^{\frac{2(c+\log_2\rho) -2\log_2\rho}{c+1}}} \right)\nonumber
\\ & = O\left( \frac{\|\bm{x}_1-\bm{x}^*\|^2}{\ell^{\frac{2c}{c+1}}}\right).\nonumber
\end{align}
Here, (i) arises from \eqref{eq:krr1}, whereas (ii) invokes Lemma~\ref{lemma:sequence_computation}, inequality \eqref{eq:2-ot-UB}, as well as the property that 
$$
\alpha_{t_i+1}\leq \rho^{o_{t_i+1}}+3 = O(\rho^{o_{t_i+1}})
\leq O\big({t_i}^{\frac{\log_2\rho}{c+1}} \big).
$$

This taken together with \eqref{eq:krr0} further results in
\begin{align}
f_{\ell}-f^* 
&= f_{\ell}-
f_{t_i+1} + \big( f_{t_i+1} - f^* \big)
\notag \\
&\leq  O\left(  \frac{\|\bm{x}_1-\bm{x}^*\|^2}{\ell^{\frac{    c+\log_2\rho   }{c+1}}} + \frac{\|\bm{x}_1-\bm{x}^*\|^2}{A_{t_i+1}}\right) =  O\left(  \frac{\|\bm{x}_1-\bm{x}^*\|^2}{\ell^{\frac{    c+\log_2\rho   }{c+1}}}\right).\nonumber
\end{align}
%
Consequently, 
we have shown that, for any $\ell\in \cup_{i\geq 1} (t_i, t_{i+1}]  = [3, \infty)$, 
\begin{align}
f_{\ell}-f^* &\leq O\left( \frac{\|\bm{x}_1-\bm{x}^*\|^2}{ A_{t_i+1}}+ \frac{\|\bm{x}_1-\bm{x}^*\|^2}{\ell^{\frac{    2c  }{c+1}}}\right) \leq  O\left(  \frac{\|\bm{x}_1-\bm{x}^*\|^2}{\ell^{\frac{2c}{c+1}}} +  \frac{\|\bm{x}_1-\bm{x}^*\|^2}{\ell^{\frac{    c+\log_2\rho   }{c+1}}}\right) \\
&= O\left(    \frac{\|\bm{x}_1-\bm{x}^*\|^2}{\ell^{\frac{    2\log_2\rho   }{1+\log_2\rho}}}\right),
\end{align}
where the last line follows by taking $c = \log_2\rho$. 

It remains to justify the advertised result when $\ell<3$. Towards this end, it is easily seen that 
\begin{align*}
f_1 - f^* &\leq \frac{\|\bm{x}_1-\bm{x}^*\|^2}{2} 
\qquad \text{and} \\
f_2-f^* &\leq f_1-f^* + \frac{\alpha_1^2 +2\alpha_1}{2}\|\bm{g}_1\|^2 \leq  (1+\alpha_1^2 +2\alpha_1)(f_1-f^*)\leq \frac{9\|\bm{x}_1-\bm{x}^*\|^2}{2},
\end{align*}
where we have made use of \eqref{eq:local_error}. 
We have thus completed the proof. 

\section{Extension to smooth and strongly convex problems}\label{sec:strong_cvx}


In this section, we 
further extend our result to accommodate smooth and strongly convex optimization; that is, we assume that the objective function $f$ in \eqref{eq:convex-smooth} is $1$-smooth and $\mu$-strongly convex for some strong convexity parameter $\mu\in (0,1]$. Here and throughout, we denote by $\kappa=1/\mu$ the condition number. Our result, which guarantees acceleration of standard GD theory (i.e., $\exp(-\Omega(T/\kappa))$ in an anytime manner, is stated as follows. 
\begin{theorem}\label{thm:strcox}
There is a stepsize schedule $\{\alpha_t\}_{t=1}^{\infty}$,  generated without knowing the stopping time, such that the gradient descent iterates \eqref{eq:GD} obey
\begin{align}
f(\bm{x}_T)-f^* \leq O\left(  \exp\left(-\frac{CT}{\kappa^{\varsigma}}\right) \|\bm{x}_1-\bm{x}^*\|^2\right), \label{eq:ssrt}
\end{align}
where 
$\varsigma =1/\vartheta =  \frac{1+\log_2\rho}{2\log_2\rho}<0.893$, and $C>0$ is some numerical constant. Here, $T$ denotes an arbitrary stopping time that is unknown a priori. 
\end{theorem}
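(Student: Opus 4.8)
The plan is to obtain Theorem~\ref{thm:strcox} as a black-box reduction to Theorem~\ref{thm:main} via a restarting (periodic) scheme, exploiting the fact that strong convexity converts a small objective gap into a contraction of the distance to $\bm{x}^*$.

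First I would establish a \emph{single-epoch contraction}. Run GD with the anytime schedule $\widehat{\bm{s}}$ of Section~\ref{sec:const} (with the choice $c=\log_2\rho$) for $t$ steps starting from an arbitrary point $\bm{z}$. Since $\widehat{\bm{s}}_t$ is primitive (Lemma~\ref{lemma:well-define}), Definition~\ref{defn:primitive} yields
\begin{equation}
A_{t+1}\big(f(\bm{x}_{t+1})-f^*\big)+\tfrac12\|\bm{x}_{t+1}-\bm{x}^*\|^2\leq \tfrac12\|\bm{z}-\bm{x}^*\|^2 .\notag
\end{equation}
Strong convexity gives $f(\bm{x}_{t+1})-f^*\geq \tfrac{\mu}{2}\|\bm{x}_{t+1}-\bm{x}^*\|^2$, so $(1+\mu A_{t+1})\|\bm{x}_{t+1}-\bm{x}^*\|^2\leq \|\bm{z}-\bm{x}^*\|^2$. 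By Lemma~\ref{lemma:sequence_computation}, $A_{t+1}(\widehat{\bm{s}})\geq \tfrac{1}{36}t^{\vartheta}$ with $\vartheta=\frac{2\log_2\rho}{1+\log_2\rho}$, hence picking $T_0=\Theta(\kappa^{1/\vartheta})$ large enough that $\mu A_{T_0}\geq 1$ forces $\|\bm{x}_{T_0}-\bm{x}^*\|^2\leq \tfrac12\|\bm{z}-\bm{x}^*\|^2$.

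Next I would define the proposed schedule to be periodic with period $T_0$, each period being the first $T_0$ entries of $\widehat{\bm{s}}$; crucially $T_0$ depends only on $\kappa=1/\mu$ and not on the stopping time, so the schedule is still generated ``blindly''. Viewing the GD iterates on steps $[mT_0,(m+1)T_0]$ as a fresh run of the anytime schedule started from $\bm{x}_{mT_0}$, the single-epoch contraction and an induction on $m$ give $\|\bm{x}_{mT_0}-\bm{x}^*\|^2\leq 2^{-m}\|\bm{x}_1-\bm{x}^*\|^2$.

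Finally I would handle an arbitrary stopping time $T$ by writing $T=mT_0+r$ with $0\leq r<T_0$. When $r=0$, $1$-smoothness gives $f(\bm{x}_T)-f^*\leq \tfrac12\|\bm{x}_T-\bm{x}^*\|^2\leq 2^{-m-1}\|\bm{x}_1-\bm{x}^*\|^2$; when $r\geq 1$, Theorem~\ref{thm:main} applied inside the current epoch gives $f(\bm{x}_T)-f^*\leq O\!\big(\|\bm{x}_{mT_0}-\bm{x}^*\|^2/r^{\vartheta}\big)\leq O\!\big(2^{-m}\|\bm{x}_1-\bm{x}^*\|^2\big)$ (using $r^{\vartheta}\geq 1$). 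Since $m\geq T/T_0-1$ and $T_0=\Theta(\kappa^{1/\vartheta})=\Theta(\kappa^{\varsigma})$, we get $2^{-m}\leq 2\exp(-(\ln 2)T/T_0)=O\!\big(\exp(-\Omega(T/\kappa^{\varsigma}))\big)$, which is \eqref{eq:ssrt} with $\varsigma=1/\vartheta=\frac{1+\log_2\rho}{2\log_2\rho}$. The reduction is essentially bookkeeping; the only delicate point is verifying that the anytime guarantee survives across epoch boundaries and for small $T$, and choosing the constants in $T_0=\Theta(\kappa^{1/\vartheta})$ so that the exponent $\varsigma$ comes out exactly as claimed.
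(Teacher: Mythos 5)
Your proposal is essentially the paper's own argument: pick an epoch length $T_0=\Theta(\kappa^{1/\vartheta})$ based only on $\kappa$ so that one epoch of the anytime schedule halves $\|\bm{x}-\bm{x}^*\|^2$, repeat the first $T_0$ stepsizes periodically, and bound arbitrary $T=mT_0+r$ by combining the geometric contraction with the $O(\|\cdot\|^2/r^\vartheta)$ anytime bound inside the current epoch. The paper does exactly this, with $\tau$ chosen as the smallest integer with $A_{\tau+1}(\widehat{\bm{s}})\geq 4C_0\kappa$ and Theorem~\ref{thm:main} applied within each period.

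One small technical point to repair in your write-up: you invoke Definition~\ref{defn:primitive} directly for the prefix of $\widehat{\bm{s}}$ of length $t$, but the prefix is only guaranteed primitive at the join lengths $t_i$ (Lemma~\ref{lemma:well-define} asserts $\widehat{\bm{s}}_i$, of length $t_i$, is primitive — not arbitrary prefixes). So the displayed inequality $(1+\mu A_{t+1})\|\bm{x}_{t+1}-\bm{x}^*\|^2\leq\|\bm{z}-\bm{x}^*\|^2$ is not available for all $t$. This is harmless: either pick $T_0$ to be $t_i+1$ for an appropriate $i$ (using $t_{i+1}\leq 3t_i$ to ensure $T_0=\Theta(\kappa^{1/\vartheta})$), or simply use Theorem~\ref{thm:main} as you do elsewhere, i.e., $f_{T_0}-f^*\leq C_0\|\bm{z}-\bm{x}^*\|^2/T_0^\vartheta$ combined with $f_{T_0}-f^*\geq\tfrac{\mu}{2}\|\bm{x}_{T_0}-\bm{x}^*\|^2$, which is precisely the paper's route.
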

\begin{proof}[Proof of Theorem~\ref{thm:strcox}]
%
Recall our construction of $\widehat{\bm{s}}$ in the proof of Theorem~\ref{thm:main} (see Section~\ref{sec:const}). According to Theorem~\ref{thm:main}, there exists a universal constant $C_0>0$ such that running GD with the stepsize schedule $\widehat{\bm{s}}$ achieves
\begin{align}
f(\bm{x}_t)-f^* \leq \frac{C_0 \|\bm{x}_1-\bm{x}^*\|^2}{t^{\vartheta}}.\nonumber
\end{align}

Let us begin by constructing a stepsize schedule tailored to the $\mu$-strongly convex problem. 
Take $\tau=\tau(\mu)$ to be the smallest integer such that $A_{\tau+1}(\widehat{\bm{s}})\geq \frac{4C_0}{\mu} = 4C_0\kappa$. Lemma~\ref{lemma:sequence_computation} tells us that 
\begin{equation}
\frac{1}{36}\tau^{\vartheta}=\frac{1}{36}\tau^{\frac{c+\log_2 \rho}{c+1}}\leq A_{\tau}(\widehat{\bm{s}})\leq 4C_0\kappa,
\label{eq:Atau-UB}
\end{equation}
which implies that $$\tau\leq 144C_0\kappa^{\frac{1}{\vartheta}} = 144C_0\kappa^{\varsigma}.$$
Now, let $\widetilde{\bm{s}} = \bm{\alpha}_{1:\tau}(\widehat{\bm{s}})$ (i.e., the first $\tau$ stepsizes in $\widehat{\bm{s}}$), and set $\widetilde{\bm{s}}^*$ to be the infinite stepsize schedule $[ \widetilde{\bm{s}}^{\top},\widetilde{\bm{s}}^{\top},\ldots ]^{\top}$; that is, $\alpha_{i\tau+j}(\widetilde{\bm{s}}^*) =\alpha_j(\widetilde{\bm{s}}) = \alpha_j(\widehat{\bm{s}})$ for any $i\geq 0$ and $1\leq j \leq \tau$.

Next, we would like to show that the claimed result \eqref{eq:ssrt} holds with the stepsize schedule   $\widehat{\bm{s}}^*$.
In view of Theorem~\ref{thm:main}, we know that
\begin{align}
  f_{j}-f^* &\leq   \frac{C_0\|\bm{x}_1-\bm{x}^*\|^2}{j^{\vartheta}} \leq 55C_0 \exp\left( -\frac{j}{36C_0\kappa^{\varsigma}} \right)\cdot \|\bm{x}_1-\bm{x}^*\|^2\qquad \text{for all } 1\leq j\leq \tau;  \label{eq:xy0}
\\  f_{\tau+1}-f^* &\leq \frac{C_0\|\bm{x}_1-\bm{x}^*\|^2}{  \tau^{\theta} } = \frac{\|\bm{x}_1-\bm{x}^*\|^2}{144 \kappa^{\varsigma\cdot \vartheta}} = \frac{\|\bm{x}_1-\bm{x}^*\|^2}{144\kappa} = \frac{\mu \|\bm{x}_1-\bm{x}^*\|^2}{144},
\end{align}
where we have invoked \eqref{eq:Atau-UB}. 
Observing that 
$
f_{\tau+1}-f^* \geq \frac{\mu \|\bm{x}_{\tau+1}-\bm{x}^*\|^2}{2}$ due to $\mu$-strong convexity, we have $$\|\bm{x}_{\tau+1}-\bm{x}^*\|^2 \leq \frac{1}{72}\|\bm{x}_1-\bm{x}^*\|^2.$$ 

Invoking similar arguments reveals that: for any $i\geq 1$ and $1\leq j \leq \tau$, one has
\begin{align}
\frac{\mu}{2}\|\bm{x}_{i\tau +1}-\bm{x}^*\|^2  \leq f_{i\tau +1}-f^* 
\leq \frac{\mu \|\bm{x}_{(i-1)\tau+1}-\bm{x}^*\|^2}{4}\nonumber
\end{align}
\begin{align}
\text{and}\qquad f_{i\tau + j}-f^* 
\leq \frac{C_0 \|\bm{x}_{i\tau+1}-\bm{x}^*\|^2}{j^{\vartheta}}
\leq C_0 \|\bm{x}_{i\tau+1}-\bm{x}^*\|^2.\nonumber
\end{align}
As a result, we can deduce that 
\begin{align}
\|\bm{x}_{i\tau +1}-\bm{x}^*\|^2  & \leq \frac{1}{2} \|\bm{x}_{(i-1)\tau+1}-\bm{x}^*\|^2\nonumber
\\ & \leq \left( \frac{1}{2}\right)^i \|\bm{x}_1-\bm{x}^*\|^2 \nonumber
\\ & \leq \exp\left( -\log 2 \cdot  \frac{i\tau +1}{2\tau} \right) \|\bm{x}_1-\bm{x}^*\|^2 \nonumber
\\ & \leq \exp\left( - \frac{i\tau + 1}{576C_0 \kappa^{\varsigma}} \right)\|\bm{x}_1-\bm{x}^*\|^2,\nonumber
\end{align}
and as a result, 
\begin{align}
f_{i\tau + j}-f^* 
\leq C_0 \|\bm{x}_{i\tau+1}-\bm{x}^*\|^2
\leq C_0 \exp\left( - \frac{i\tau + 1}{576C_0 \kappa^{\varsigma}} \right)\|\bm{x}_1-\bm{x}^*\|^2\leq C_0\exp\left( - \frac{i\tau + j}{1152C_0 \kappa^{\varsigma}} \right)\|\bm{x}_1-\bm{x}^*\|^2.\label{eq:xy1}
\end{align}

To finish up, combine \eqref{eq:xy0} and \eqref{eq:xy1} to arrive at
\begin{align}
f_T-f^* 
\leq 55C_0\exp\left(-\frac{CT}{\kappa^{\varsigma}} \right)\cdot \|\bm{x}_1-\bm{x}^*\|^2
\qquad \text{for any }T\geq 1,\nonumber
\end{align}
where $C = \frac{1}{1152C_0}$. This concludes   
the proof.
\end{proof}

\section*{Acknowledgements}

YC is supported in part by the Alfred P.~Sloan Research Fellowship, the AFOSR grant FA9550-22-1-0198, the ONR grant N00014-22-1-2354,  and the NSF grant CCF-2221009. 
SSD is supported in part by the Alfred P.~Sloan Research Fellowship, NSF DMS 2134106, NSF CCF 2212261, NSF IIS 2143493, and NSF IIS 2229881. 
JDL acknowledges support of  NSF CCF 2002272, NSF IIS 2107304,  NSF CIF 2212262, ONR Young Investigator Award, and NSF CAREER Award 2144994. We would like to thank Ernest Ryu for helpful discussions and references regarding the literature.

\bibliographystyle{apalike}
\bibliography{bib-GD.bib}

\appendix

\section{Proof of Lemma~\ref{lemma:sequence_computation}}
\label{sec:proof-lem-sequence-computation}
First, let us look at the 
case with $o_t=1$, 
for which we have  $t\leq 2k_1= 2\cdot \left\lfloor2^{c+1}\right\rfloor$. 
Given that $\varphi(x,y) >1$ for all $x,y \geq 0$, we can easily verify that
$$A_{t+1}(\widehat{\bm{s}})\geq t\geq  \frac{1}{36} t^{\frac{c+\log_2\rho}{c+1}}.$$ 
It is also easily seen that 
$
    2^{o_t}=2\leq 2t^{\frac{1}{c+1}}.
$

Now, let us turn to the case 
where $o_{t}\geq 2$. Let $m \in [1,k_{o_t}]$ be the integer such that $\sum_{j=1}^{o_t-1}k_j 2^j + (m-1) \cdot 2^{o_t} <t\leq  \sum_{j=1}^{o_t-1}k_j 2^j + m \cdot 2^{o_t}$. By definition, we have 
\begin{align}
 t &\leq \sum_{j=1}^{o_t-1}k_j 2^j + m  2^{o_t} \leq  4 \cdot 2^{(c+1)(o_t-1)}+m 2^{o_t};\nonumber
\\  A_{t+1}(\widehat{\bm{s}}) &\geq \sum_{j=1}^{o_t-1}(\rho^j -1)\cdot k_j + (m-1) (\rho^{o_t}-1)\geq \frac{1}{2} \cdot 2^{(c+\log_2\rho) (o_t-1)}+\frac{m-1}{2} \rho^{o_t} ,\nonumber
\end{align}
where the second line invokes Lemma~\ref{lemma:basic_property}. 
\begin{itemize}
\item 
If $m 2^{o_t}\leq 2^{(c+1)(o_t-1)}$, then we have 
$t\leq 3\cdot  2^{(c+1)(o_t-1)}$, which means that $A_{t+1}(\widehat{\bm{s}})\geq \frac{1}{2}\cdot 2^{(c+\log_2\rho)(o_t-1)}\geq \frac{1}{18} t^{\frac{c+\log_2\rho}{c+1}}$. 

\item If $m 2^{o_t}> 2^{(c+1)(o_t-1)}$ --- i.e, $ 2^{o_t c}\geq m> 2^{o_t c -c-1}\geq 1$ --- then one has
\begin{align}
t^{\frac{c+\log_2\rho}{c+1}}\leq  (4 \cdot 2^{(c+1)(o_t-1)}+m 2^{o_t})^{\frac{c+\log_2\rho}{c+1}}< 9 (m 2^{o_t})^{\frac{c+\log_2\rho}{c+1}}\leq 9 \cdot m \rho^{o_t}\leq 36 \cdot \frac{m-1}{2} \rho^{o_t}\leq 36 A_{t+1}(\widehat{\bm{s}}).\nonumber
\end{align}
\end{itemize}
Putting these two cases together establishes the claim~\eqref{eq:At-lower-bound}.

Regarding the second claim, in the case where $o_t\geq 2$, we have
\begin{align}
t\geq \sum_{j=1}^{o_t-1}k_j 2^j \geq \sum_{j=1}^{o_t-1} 2^{(c+1)j}\geq  2^{(c+1)(o_t-1)},
\end{align}
thus indicating that $2t^{\frac{1}{c+1}}\geq 2^{o_t}$.

\section{Proof of preliminary facts from \citet{zhang2024accelerated}}

\subsection{Proof of Lemma~\ref{lemma:key1}}
\label{sec:proof-lemma:key1}
%

As mentioned previously, this lemma was established by \cite{zhang2024accelerated}. We present the proof for completeness.

To begin with, we single out the following lemma, originally established by  \citet[Lemma 3.1]{zhang2024accelerated}, that plays a key role in the proof of Lemma~\ref{lemma:key1}. We shall provide a proof in Appendix~\ref{sec:proof-lemma:key2}. 
 \begin{lemma}(\citet[Lemma 3.1]{zhang2024accelerated})\label{lemma:key2}
Assume that $\bm{\alpha}_{1:\ell-1}$ is  primitive. For any $\alpha\in [1, A_{\ell}+2)$, if we set $\alpha_0 =\alpha$, then it holds that
\begin{align}
f_0 -f_{\ell}\geq \frac{A_{\ell} + 3\alpha - 2\alpha^2}{2(A_{\ell}+2- \alpha)} \|\bm{g}_0\|^2 + \frac{2A_{\ell}^2 + 3A_{\ell}+\alpha}{2(A_{\ell}+2-\alpha)} \|\bm{g}_{\ell}\|^2.\nonumber
\end{align}
 \end{lemma}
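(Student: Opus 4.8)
\textbf{Proof proposal for Lemma~\ref{lemma:key2}.}

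The plan is to unfold the definition of a primitive stepsize schedule and feed in the extra information coming from the preliminary step $\bm{x}_1 = \bm{x}_0 - \alpha\bm{g}_0$, exactly as in the proof of Lemma~\ref{lemma:key}, but this time tracking the quantity $f_0 - f_\ell$ rather than $f_\ell - f_0$. First I would invoke Lemma~\ref{lemma:key}: since $\bm{\alpha}_{1:\ell-1}$ is primitive, we have
\begin{align}
A_\ell(f_\ell - f_0) + \tfrac12\|\bm{x}_\ell - \bm{x}_0\|^2 + C_\ell\|\bm{g}_\ell\|^2 \leq \tfrac12\|\bm{x}_1 - \bm{x}_0\|^2 + \sum_{i=1}^{\ell-1}\alpha_i\langle\bm{g}_i,\bm{g}_0\rangle - \tfrac{A_\ell}{2}\|\bm{g}_0\|^2.\nonumber
\end{align}
Using $\bm{x}_1 - \bm{x}_0 = -\alpha\bm{g}_0$ and $\sum_{i=1}^{\ell-1}\alpha_i\bm{g}_i = \bm{x}_1 - \bm{x}_\ell = \bm{x}_0 - \bm{x}_\ell - \alpha\bm{g}_0$, the right-hand side becomes $\tfrac{\alpha^2}{2}\|\bm{g}_0\|^2 + \langle\bm{x}_0 - \bm{x}_\ell - \alpha\bm{g}_0, \bm{g}_0\rangle - \tfrac{A_\ell}{2}\|\bm{g}_0\|^2 = \big(\tfrac{\alpha^2}{2} - \alpha - \tfrac{A_\ell}{2}\big)\|\bm{g}_0\|^2 + \langle\bm{x}_0 - \bm{x}_\ell, \bm{g}_0\rangle$, so that
\begin{align}
A_\ell(f_0 - f_\ell) \geq \tfrac12\|\bm{x}_\ell - \bm{x}_0\|^2 + C_\ell\|\bm{g}_\ell\|^2 + \big(\tfrac{A_\ell}{2} + \alpha - \tfrac{\alpha^2}{2}\big)\|\bm{g}_0\|^2 - \langle\bm{x}_0 - \bm{x}_\ell, \bm{g}_0\rangle.\nonumber
\end{align}

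Next I would bring in the cocoercivity/smoothness bound $f_0 - f_\ell \leq \langle\bm{g}_0, \bm{x}_0 - \bm{x}_\ell\rangle - \tfrac12\|\bm{g}_0 - \bm{g}_\ell\|^2$ from \eqref{eq:basic-inequalities-f}, which gives a second inequality controlling $f_0 - f_\ell$ from above and, crucially, involving the same cross term $\langle\bm{g}_0, \bm{x}_0 - \bm{x}_\ell\rangle$ and the distance $\|\bm{x}_0 - \bm{x}_\ell\|$. The idea is to take a suitable nonnegative linear combination of the two displayed relations so that the cross term $\langle\bm{x}_0 - \bm{x}_\ell, \bm{g}_0\rangle$ and the unknown distance term $\|\bm{x}_\ell - \bm{x}_0\|^2$ both cancel (or can be bounded away). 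Concretely, multiply the cocoercivity inequality by an appropriate weight $\beta = \beta(\alpha, A_\ell) \geq 0$ and add it to the previous bound; the $\langle\bm{x}_0 - \bm{x}_\ell, \bm{g}_0\rangle$ terms cancel when $\beta$ is chosen to match the coefficient, after which the remaining $\|\bm{x}_\ell - \bm{x}_0\|^2$ and $\|\bm{g}_0 - \bm{g}_\ell\|^2$ contributions combine into a perfect square that can be dropped (it is nonnegative), leaving only $\|\bm{g}_0\|^2$ and $\|\bm{g}_\ell\|^2$ terms. Collecting coefficients and dividing by $A_\ell + 2 - \alpha$ (which is positive precisely under the hypothesis $\alpha \in [1, A_\ell + 2)$) should yield exactly the stated coefficients $\frac{A_\ell + 3\alpha - 2\alpha^2}{2(A_\ell + 2 - \alpha)}$ and $\frac{2A_\ell^2 + 3A_\ell + \alpha}{2(A_\ell + 2 - \alpha)}$.

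The main obstacle I anticipate is bookkeeping: getting the weight $\beta$ right so that the $\|\bm{x}_\ell - \bm{x}_0\|^2$ term is exactly absorbed into a completed square together with $\|\bm{g}_0 - \bm{g}_\ell\|^2$ (using $C_\ell = \tfrac12 A_\ell(A_\ell+1)$), and then verifying that the residual quadratic form in $\|\bm{g}_0\|^2, \|\bm{g}_\ell\|^2$ reduces to the advertised rational functions of $\alpha$ and $A_\ell$. I would also need to double-check that $\alpha \geq 1$ is used only to ensure the relevant weights/coefficients have the right sign (in particular that the dropped square term is legitimately nonnegative and that $A_\ell + 2 - \alpha > 0$), and that no hidden use of $\alpha < A_\ell + 2$ beyond the denominator positivity is needed. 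Once the algebra is arranged so the cross terms vanish, the rest is a routine—if slightly tedious—simplification.
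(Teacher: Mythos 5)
Your proposal has a genuine gap, and I think it fails at two points.

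\textbf{(a) Sign of the cross term.} After your clean application of Lemma~\ref{lemma:key}, you have
\begin{align}
A_\ell(f_0-f_\ell) \geq \tfrac12\|\bm{x}_\ell-\bm{x}_0\|^2 + C_\ell\|\bm{g}_\ell\|^2 + \Big(\tfrac{A_\ell}{2}+\alpha-\tfrac{\alpha^2}{2}\Big)\|\bm{g}_0\|^2 - \langle\bm{x}_0-\bm{x}_\ell,\bm{g}_0\rangle.\nonumber
\end{align}
To lower-bound the right side you need an \emph{upper} bound on $\langle\bm{x}_0-\bm{x}_\ell,\bm{g}_0\rangle$. But the cocoercivity inequality you propose, $f_0-f_\ell \leq \langle\bm{g}_0,\bm{x}_0-\bm{x}_\ell\rangle - \tfrac12\|\bm{g}_0-\bm{g}_\ell\|^2$, is a \emph{lower} bound on $\langle\bm{g}_0,\bm{x}_0-\bm{x}_\ell\rangle$. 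Taking a nonnegative combination of your display and this inequality therefore does not cancel the cross term: with weight $\beta\ge 0$ the combined coefficient becomes $-(1+\beta)$, i.e.\ the term gets worse. So the ``cancellation'' step as described cannot happen.

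\textbf{(b) Even the square-completion variant is quantitatively too weak.} The only legal way to handle $\tfrac12\|\bm{x}_\ell-\bm{x}_0\|^2 - (1+\beta)\langle\bm{x}_0-\bm{x}_\ell,\bm{g}_0\rangle$ is to complete a square in $\bm{x}_0-\bm{x}_\ell$ and drop it, which costs $\tfrac{(1+\beta)^2}{2}\|\bm{g}_0\|^2$. After also discarding the nonnegative remainder $\tfrac{\beta}{2}\|\bm{g}_0-\bm{g}_\ell\|^2$, one arrives (for $0\le\beta<A_\ell$) at
\begin{align}
f_0-f_\ell \geq \frac{C_\ell}{A_\ell-\beta}\|\bm{g}_\ell\|^2 + \frac{1}{A_\ell-\beta}\Big(\tfrac{A_\ell}{2}+\alpha-\tfrac{\alpha^2}{2}-\tfrac{(1+\beta)^2}{2}\Big)\|\bm{g}_0\|^2.\nonumber
\end{align}
Testing at $A_\ell=\alpha=1$: the lemma asserts $f_0-f_\ell\ge \tfrac12\|\bm{g}_0\|^2 + \tfrac32\|\bm{g}_\ell\|^2$, and choosing $\beta$ to match the $\|\bm{g}_\ell\|^2$ coefficient (namely $\beta=1/3$) yields only $\tfrac16\|\bm{g}_0\|^2$. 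Keeping the $\|\bm{g}_0-\bm{g}_\ell\|^2$ remainder and optimizing the associated Young parameter does not help: the difference between your derivable quadratic form and the target one is not positive semidefinite for any $\beta\ge 0$.

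\textbf{Why this happens, and what the paper does differently.} Passing through Lemma~\ref{lemma:key} discards information: that lemma's proof uses only the $f_i-f_0$ type cocoercivity inequalities (cf.\ \eqref{eq:qr2}), whereas the paper's proof of Lemma~\ref{lemma:key2} starts directly from the primitive definition \eqref{eq:sw0} and uses \emph{both} the $\bm{g}_0$-centered and the $\bm{g}_\ell$-centered cocoercivity bounds \eqref{eq:sw1}--\eqref{eq:sw2}, averaging them with equal weight. That symmetrization is exactly what eliminates the unwanted $\tfrac12\|\bm{x}_\ell-\bm{x}_0\|^2$ and cross term without paying the $\tfrac{(1+\beta)^2}{2}\|\bm{g}_0\|^2$ penalty, and it produces the term $\langle\bm{x}_0-\bm{x}_\ell,(\alpha-1)\bm{g}_0-\bm{g}_\ell\rangle+\alpha\langle\bm{g}_0,\bm{g}_\ell\rangle$ in \eqref{eq:sw4}, which can then be lower-bounded via the two further cocoercivity inequalities (with weights $\alpha-1$ and $1$) to give the stated rational coefficients. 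To fix your argument you would need to bring in the $f_i-f_\ell$ family of interpolation inequalities before summing over $i$, not after — which essentially forces you back onto the paper's derivation rather than going through Lemma~\ref{lemma:key}.
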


%
%


Next, in view of the definition of the primitive stepsize schedule (cf.~Definition~\ref{defn:primitive}), we can easily see that
\begin{align}
x(f_{\ell}-f^{*})+\frac{x(x+1)}{2}\|\bm{g}_{\ell}\|^{2}+\frac{1}{2}\|\bm{x}_{\ell}-\bm{x}^{*}\|^{2} & \leq\frac{1}{2}\|\bm{x}_{1}-\bm{x}^{*}\|^{2}+\sum_{i=1}^{\ell-1}\alpha_{i}\left(f_{i}-f^{*}-\langle\bm{g}_{i},\bm{x}_{i}-\bm{x}^{*}\rangle+\frac{1}{2}\|\bm{g}_{i}\|^{2}\right),\nonumber\\
y(f_{k}-f^{*})+\frac{y(y+1)}{2}\|\bm{g}_{k}\|^{2}+\frac{1}{2}\|\bm{x}_{k}-\bm{x}^{*}\|^{2} & \leq\frac{1}{2}\|\bm{x}_{\ell+1}-\bm{x}^{*}\|^{2}+\sum_{i=\ell+1}^{k-1}\alpha_{i}\left(f_{i}-f^{*}-\langle\bm{g}_{i},\bm{x}_{i}-\bm{x}^{*}\rangle+\frac{1}{2}\|\bm{g}_{i}\|^{2}\right),\nonumber
\end{align}
where we take $x=A_{\ell}$ and $y=A_{k}-A_{\ell+1}$ for notational simplicity.
	Given that $$z \coloneqq x+y + \alpha = A_{\ell} + (A_{k}-A_{\ell+1}) + \alpha_{\ell}=A_k,$$ Lemma~\ref{lemma:key2} tells us that
\begin{align}
	(x+\alpha) (f_{k}-f_{\ell})\leq -\frac{(x+\alpha)(y+3\alpha -2\alpha^2)}{2(y+2-\alpha)} \|\bm{g}_{\ell}\|^2 - \frac{(x+\alpha)(2y^2 + 3y +\alpha)}{2(y+2-\alpha)} \|\bm{g}_k\|^2.
\end{align}
Adding the above three inequalities and utilizing $z=x+y+\alpha$ yield
 \begin{align}
L_1+L_2 +L_3 +L_4 \leq R_1+R_2+R_3+R_4,\label{eq:L1234-R1234}
 \end{align}
 where 
 \begin{align}
	 L_{1} & =z(f_{k}-f^{*})+\frac{z(z+1)}{2}\|\bm{g}_{k}\|^{2}+\frac{1}{2}\|\bm{x}_{k}-\bm{x}^{*}\|^{2}=A_{k}(f_{k}-f^{*})+C_{k}\|\bm{g}_{k}\|^{2}+\frac{1}{2}\|\bm{x}_{k}-\bm{x}^{*}\|^{2};\nonumber\\
L_{2} & =-\alpha(f_{\ell}-f^{*})+\frac{1}{2}\|\bm{x}_{\ell}-\bm{x}^{*}\|^{2};\nonumber\\
L_{3} & =\frac{x(x+1)}{2}\|\bm{g}_{\ell}\|^{2};\nonumber\\
L_{4} & =\frac{y(y+1)-z(z+1)}{2}\|\bm{g}_{k}\|^{2};\nonumber\\
R_{1} & =\frac{1}{2}\|\bm{x}_{1}-\bm{x}^{*}\|^{2}+\sum_{i=1}^{k-1}\alpha_{i}\bigg(f_{i}-f^{*}-\langle\bm{g}_{i},\bm{x}_{i}-\bm{x}^{*}\rangle+\frac{1}{2}\|\bm{g}_{i}\|^{2}\bigg);\nonumber\\
	 R_{2} & =\frac{1}{2}\|\bm{x}_{\ell+1}-\bm{x}^{*}\|^{2}-\alpha\big(f_{\ell}-f^{*}-\langle\bm{g}_{\ell},\bm{x}_{\ell}-\bm{x}^{*}\rangle\big)-\frac{1}{2}\alpha^{2}\|\bm{g}_{\ell}\|^{2};\nonumber\\
R_{3} & =\left(-\frac{\alpha}{2}+\frac{\alpha^{2}}{2}-\frac{(x+\alpha)(y+3\alpha-2\alpha^{2})}{(y+2-\alpha)}\right)\|\bm{g}_{\ell}\|^{2};\nonumber\\
R_{4} & =-\frac{(x+\alpha)(2y^{2}+3y+\alpha)}{y+2-\alpha}\|\bm{g}_{k}\|^{2}.\nonumber
 \end{align}

We now proceed to simplify \eqref{eq:L1234-R1234}. 
Firstly, 
it is readily seen that 
\begin{align}
L_{2}-R_{2} & =\frac{1}{2}\|\bm{x}_{\ell}-\bm{x}^{*}\|^{2}-\frac{1}{2}\|\bm{x}_{\ell+1}-\bm{x}^{*}\|^{2}-\alpha\langle\bm{g}_{\ell},\bm{x}_{\ell}-\bm{x}^{*}\rangle+\frac{1}{2}\alpha^{2}\|\bm{g}_{\ell}\|^{2}\nonumber\\
 & =\frac{1}{2}\|\bm{x}_{\ell}-\bm{x}^{*}\|^{2}-\frac{1}{2}\|\bm{x}_{\ell}-\bm{x}^{*}-\alpha\bm{g}_{\ell}\|^{2}-\alpha\langle\bm{g}_{\ell},\bm{x}_{\ell}-\bm{x}^{*}\rangle+\frac{1}{2}\alpha^{2}\|\bm{g}_{\ell}\|^{2}\nonumber\\
 & =0. \nonumber
\end{align}
Secondly, recalling our specific choice $\alpha = \varphi(x,y) = \frac{-(x+y)+\sqrt{(x+y+2)^2+4(x+1)(y+1) }}{2}$, we can easily verify that $$\alpha^2 +(x+y)\alpha - (xy+2x+2y+2)=0.$$
This allows one to demonstrate that
\begin{align}
	L_3-R_3&  = \left(\frac{x(x+1)}{2}+\frac{\alpha}{2}-\frac{\alpha^2}{2}+\frac{(x+\alpha)(y+3\alpha -2\alpha^2)}{2(y+2-\alpha)} \right) \|\bm{g}_{\ell}\|^{2} =0;\nonumber
\\ L_4- R_4&  = \left( \frac{y(y+1)-z(z+1)}{2}+\frac{(x+\alpha)(2y^2+3y+\alpha)}{y+2-\alpha}\right) \|\bm{g}_{k}\|^{2} = 0.\nonumber
\end{align}
Substitution into \eqref{eq:L1234-R1234} then results in
$L_1\leq R_1,$ namely, 
\begin{align}
A_{k}(f_{k}-f^{*})+C_{k}\|\bm{g}_{k}\|^{2}+\frac{1}{2}\|\bm{x}_{k}-\bm{x}^{*}\|^{2}
&\leq\frac{1}{2}\|\bm{x}_{1}-\bm{x}^{*}\|^{2}+\sum_{i=1}^{k-1}\alpha_{i}\left(f_{i}-f^{*}-\langle\bm{g}_{i},\bm{x}_{i}-\bm{x}^{*}\rangle+\frac{1}{2}\|\bm{g}_{i}\|^{2}\right),\nonumber\\
&\leq\frac{1}{2}\|\bm{x}_{1}-\bm{x}^{*}\|^{2},
\end{align}
where the last inequality comes from \eqref{eq:fi-fstar-ineq}. 
This completes the proof.

 \subsection{Proof of Lemma~\ref{lemma:key2}}
 \label{sec:proof-lemma:key2}
Once again, this lemma has been proven in \citet[Lemma 3.1]{zhang2024accelerated}, and we present the proof for completeness.

According to the definition of the primitive stepsize schedule, we have 
\begin{align}
A_{\ell}(f_{\ell}-f^*) + C_{\ell}\|\bm{g}_{\ell}\|^2 + \frac{1}{2}\|\bm{x}_{\ell}-\bm{x}^*\|^2 
\leq 
\frac{1}{2}\|\bm{x}_1 - \bm{x}^*\|^2 + \sum_{i=1}^{\ell-1}\alpha_i \left( f_i-f^* -\langle\bm{g}_{i},\bm{x}_{i}-\bm{x}^{*}\rangle) + \frac{1}{2} \|\bm{g}_i\|^2 \right).\label{eq:sw0}
\end{align}
Recall from the basic properties \eqref{eq:basic-inequalities-f} that
\begin{align}
 & f_i - f_{\ell} \leq \langle\bm{g}_{i},\bm{x}_{i}-\bm{x}_{\ell}\rangle -\frac{1}{2}\|\bm{g}_i-\bm{g}_{\ell}\|^2  ,\nonumber
 \\ & f_i - f_0 \leq \langle\bm{g}_{i},\bm{x}_{i}-\bm{x}_{0}\rangle - \frac{1}{2}\|\bm{g}_i-\bm{g}_{0}\|^2 ,\nonumber
\end{align}
which allow us to derive
 \begin{align}
 & f_{i}-f^{*}-\langle\bm{g}_{i},\bm{x}_{i}-\bm{x}^{*}\rangle+\frac{1}{2}\|\bm{g}_{i}\|^{2}\nonumber\\
 & \qquad\leq f_{\ell}-f^{*}-\langle\bm{g}_{i},\bm{x}_{\ell}-\bm{x}_{i}+\bm{x}_{i}-\bm{x}^{*}\rangle+\frac{1}{2}\|\bm{g}_{i}\|^{2}-\frac{1}{2}\|\bm{g}_{i}-\bm{g}_{\ell}\|^{2}\nonumber\\
 & \qquad=f_{\ell}-f^{*}-\langle\bm{g}_{i},\bm{x}_{\ell}-\bm{x}^{*}\rangle+\langle\bm{g}_{i},\bm{g}_{\ell}\rangle-\frac{1}{2}\|\bm{g}_{\ell}\|^{2},
 \nonumber
\end{align}
and similarly,
\begin{align}
 & f_{i}-f^{*}-\langle\bm{g}_{i},\bm{x}_{i}-\bm{x}^{*}\rangle+\frac{1}{2}\|\bm{g}_{i}\|^{2}\nonumber\\
 & \qquad \leq f_{0}-f^{*}-\langle\bm{g}_{i},\bm{x}_{0}-\bm{x}^{*}\rangle+\langle\bm{g}_{i},\bm{g}_{0}\rangle-\frac{1}{2}\|\bm{g}_{0}\|^{2}.\nonumber
\end{align}
As a result,  we can take advantage of these properties to deduce that
\begin{subequations}
\begin{align}
 & \sum_{i=1}^{\ell-1}\alpha_{i}\left(f_{i}-f^{*}-\langle\bm{g}_{i},\bm{x}_{i}-\bm{x}^{*}\rangle+\frac{1}{2}\|\bm{g}_{i}\|^{2}\right)\nonumber\\
 & \quad\leq A_{\ell}(f_{\ell}-f^{*})-\sum_{i=1}^{\ell}\alpha_{i}\langle\bm{g}_{i},\bm{x}_{\ell}-\bm{x}^{*}\rangle+\sum_{i=1}^{\ell-1}\alpha_{i}\langle\bm{g}_{i},\bm{g}_{\ell}\rangle-\frac{A_{\ell}}{2}\|\bm{g}_{\ell}\|^{2}\nonumber\\
 & \quad=A_{\ell}(f_{\ell}-f^{*})-\langle\bm{x}_{1}-\bm{x}_{\ell},\bm{x}_{\ell}-\bm{x}^{*}\rangle+\langle\bm{x}_{1}-\bm{x}_{\ell},\bm{g}_{\ell}\rangle-\frac{A_{\ell}}{2}\|\bm{g}_{\ell}\|^{2},\label{eq:sw1}
\end{align}
and similarly,
\begin{align}
 & \sum_{i=1}^{\ell-1}\alpha_{i}\left(f_{i}-f^{*}-\langle\bm{g}_{i},\bm{x}_{i}-\bm{x}^{*}\rangle+\frac{1}{2}\|\bm{g}_{i}\|^{2}\right)\nonumber\\
 & \quad\leq A_{\ell}(f_{0}-f^{*})-\langle\bm{x}_{1}-\bm{x}_{\ell},\bm{x}_{0}-\bm{x}^{*}\rangle+\langle\bm{x}_{1}-\bm{x}_{\ell},\bm{g}_{0}\rangle-\frac{A_{\ell}}{2}\|\bm{g}_{0}\|^{2}.\label{eq:sw2}
\end{align}
\end{subequations}
Combine \eqref{eq:sw1} and \eqref{eq:sw2} to arrive at
\begin{align}
 & \sum_{i=1}^{\ell-1}\alpha_{i}\bigg(f_{i}-f^{*}-\langle\bm{g}_{i},\bm{x}_{i}-\bm{x}^{*}\rangle+\frac{1}{2}\|\bm{g}_{i}\|^{2}\bigg)\nonumber\\
 & \leq\frac{1}{2}\left(A_{\ell}(f_{0}+f_{\ell}-2f^{*})-\langle\bm{x}_{1}-\bm{x}_{\ell},\bm{x}_{0}+\bm{x}_{\ell}-2\bm{x}^{*}\rangle+\langle\bm{x}_{1}-\bm{x}_{\ell},\bm{g}_{0}+\bm{g}_{\ell}\rangle-\frac{A_{\ell}}{2}\|\bm{g}_{\ell}\|^{2}-\frac{A_{\ell}}{2}\|\bm{g}_{0}\|^{2}\right)\nonumber\\
 & =\frac{A_{\ell}}{2}\left(f_{0}+f_{\ell}-2f^{*}-\frac{\|\bm{g}_{\ell}\|^{2}}{2}-\frac{\|\bm{g}_{0}\|^{2}}{2}\right)\nonumber\\
 & \qquad\qquad-\frac{1}{2}\langle\bm{x}_{1}-\bm{x}_{\ell},\bm{x}_{1}+\alpha\bm{g}_{0}+\bm{x}_{\ell}-2\bm{x}^{*}\rangle+\frac{1}{2}\langle\bm{x}_{0}-\bm{x}_{\ell},\bm{g}_{0}+\bm{g}_{\ell}\rangle-\frac{1}{2}\alpha\big(\langle\bm{g}_{0},\bm{g}_{\ell}\rangle+\|\bm{g}_{0}\|^{2}\big)\nonumber\\
 & =\frac{A_{\ell}}{2}\left(f_{0}+f_{\ell}-2f^{*}-\frac{\|\bm{g}_{\ell}\|^{2}}{2}-\frac{\|\bm{g}_{0}\|^{2}}{2}\right)\nonumber\\
 & \qquad\qquad-\frac{1}{2}\langle\bm{x}_{1}-\bm{x}_{\ell},\bm{x}_{1}+\bm{x}_{\ell}-2\bm{x}^{*}\rangle-\frac{1}{2}\alpha\langle\bm{g}_{0},\bm{x}_{1}-\bm{x}_{\ell}\rangle+\frac{1}{2}\langle\bm{x}_{0}-\bm{x}_{\ell},\bm{g}_{0}+\bm{g}_{\ell}\rangle-\frac{1}{2}\alpha\big(\langle\bm{g}_{0},\bm{g}_{\ell}\rangle+\|\bm{g}_{0}\|^{2}\big)\nonumber\\
 & =\frac{A_{\ell}}{2}\left(f_{0}+f_{\ell}-2f^{*}-\frac{\|\bm{g}_{\ell}\|^{2}}{2}-\frac{\|\bm{g}_{0}\|^{2}}{2}\right)\nonumber\\
 & \qquad\qquad-\frac{1}{2}\big(\|\bm{x}_{1}-\bm{x}^{*}\|^{2}-\|\bm{x}_{\ell}-\bm{x}^{*}\|^{2}\big)-\frac{1}{2}\alpha\langle\bm{g}_{0},\bm{x}_{1}-\bm{x}_{\ell}\rangle+\frac{1}{2}\langle\bm{x}_{0}-\bm{x}_{\ell},\bm{g}_{0}+\bm{g}_{\ell}\rangle-\frac{1}{2}\alpha\big(\langle\bm{g}_{0},\bm{g}_{\ell}\rangle+\|\bm{g}_{0}\|^{2}\big).\nonumber
\end{align}
Adding this inequality and \eqref{eq:sw0}, we further reach
\begin{align}
 & A_{\ell}(f_{\ell}-f^{*})+C_{\ell}\|\bm{g}_{\ell}\|^{2}\nonumber\\
 & \leq\frac{1}{2}A_{\ell}\left(f_{0}+f_{\ell}-2f^{*}-\frac{\|\bm{g}_{\ell}\|^{2}}{2}-\frac{\|\bm{g}_{0}\|^{2}}{2}\right)-\frac{1}{2}\alpha\langle\bm{g}_{0},\bm{x}_{1}-\bm{x}_{\ell}\rangle+\frac{1}{2}\langle\bm{x}_{0}-\bm{x}_{\ell},\bm{g}_{0}+\bm{g}_{\ell}\rangle-\frac{1}{2}\alpha\big(\langle\bm{g}_{0},\bm{g}_{\ell}\rangle+\|\bm{g}_{0}\|^{2}\big).\nonumber
\end{align}
Rearrange terms to arrive at
\begin{align}
 & A_{\ell}(f_{0}-f_{\ell})\nonumber\\
 & \geq2C_{\ell}\|\bm{g}_{\ell}\|^{2}+\frac{1}{2}A_{\ell}(\|\bm{g}_{\ell}\|^{2}+\|\bm{g}_{0}\|^{2})+\alpha\langle\bm{g}_{0},\bm{x}_{1}-\bm{x}_{\ell}\rangle-\langle\bm{x}_{0}-\bm{x}_{\ell},\bm{g}_{0}+\bm{g}_{\ell}\rangle+\alpha\langle\bm{g}_{0},\bm{g}_{\ell}\rangle+\alpha\|\bm{g}_{0}\|^{2}\nonumber\\
 & =2C_{\ell}\|\bm{g}_{\ell}\|^{2}+\frac{1}{2}A_{\ell}(\|\bm{g}_{\ell}\|^{2}+\|\bm{g}_{0}\|^{2})+\alpha\langle\bm{g}_{0},\bm{x}_{0}-\alpha\bm{g}_{0}-\bm{x}_{\ell}\rangle-\langle\bm{x}_{0}-\bm{x}_{\ell},\bm{g}_{0}+\bm{g}_{\ell}\rangle+\alpha\langle\bm{g}_{0},\bm{g}_{\ell}\rangle+\alpha\|\bm{g}_{0}\|^{2}\nonumber\\
 & =2C_{\ell}\|\bm{g}_{\ell}\|^{2}+\frac{1}{2}A_{\ell}(\|\bm{g}_{\ell}\|^{2}+\|\bm{g}_{0}\|^{2})+\alpha\langle\bm{g}_{0},\bm{x}_{0}-\bm{x}_{\ell}\rangle-\langle\bm{x}_{0}-\bm{x}_{\ell},\bm{g}_{0}+\bm{g}_{\ell}\rangle+\alpha\langle\bm{g}_{0},\bm{g}_{\ell}\rangle+\alpha\|\bm{g}_{0}\|^{2}-\alpha^{2}\|\bm{g}_{0}\|^{2}\nonumber\\
 & =2C_{\ell}\|\bm{g}_{\ell}\|^{2}+\frac{1}{2}A_{\ell}(\|\bm{g}_{\ell}\|^{2}+\|\bm{g}_{0}\|^{2})+\langle\bm{x}_{0}-\bm{x}_{\ell},(\alpha-1)\bm{g}_{0}-\bm{g}_{\ell}\rangle+\alpha\langle\bm{g}_{0},\bm{g}_{\ell}\rangle+\alpha\|\bm{g}_{0}\|^{2}-\alpha^{2}\|\bm{g}_{0}\|^{2}.\label{eq:sw4}
\end{align}

The next step is to bound the term $\langle\bm{x}_{0}-\bm{x}_{\ell},(\alpha-1)\bm{g}_{0}-\bm{g}_{\ell}\rangle+ \alpha \langle \bm{g}_0, \bm{g}_{\ell}\rangle$. Towards this, we recall from \eqref{eq:basic-inequalities-f} that
\begin{align}
 (\alpha - 1)(f_0 - f_{\ell} )
 &\leq (\alpha-1) \langle \bm{g}_0, \bm{x}_{0}-\bm{x}_{\ell}\rangle - \frac{\alpha-1}{2} \|\bm{g}_0-\bm{g}_{\ell}\|^2  ;\nonumber
\\  f_{\ell}-f_0 &\leq -\langle \bm{g}_{\ell}, \bm{x}_{0}-\bm{x}_{\ell}\rangle - \frac{1}{2}\|\bm{g}_0-\bm{g}_{\ell}\|^2 .\nonumber
\end{align}
Adding the preceding two inequalities gives
\begin{align}
(\alpha -2)(f_0 -f_{\ell}) & \leq  \langle\bm{x}_{0}-\bm{x}_{\ell},(\alpha-1)\bm{g}_{0}-\bm{g}_{\ell}\rangle -\frac{\alpha}{2}\|\bm{g}_0-\bm{g}_{\ell}\|^2 \nonumber
\\ & = \langle\bm{x}_{0}-\bm{x}_{\ell},(\alpha-1)\bm{g}_{0}-\bm{g}_{\ell}\rangle -\frac{\alpha}{2}(\|\bm{g}_0\|^2+\|\bm{g}_{\ell}\|^2)+ \alpha \langle\bm{g}_0, \bm{g}_{\ell}\rangle ,\nonumber
\end{align}
thus indicating that
\begin{align}
\langle\bm{x}_{0}-\bm{x}_{\ell},(\alpha-1)\bm{g}_{0}-\bm{g}_{\ell}\rangle+\alpha\langle\bm{g}_{0},\bm{g}_{\ell}\rangle & \geq(\alpha-2)(f_{0}-f_{\ell})+\frac{\alpha}{2}(\|\bm{g}_{0}\|^{2}+\|\bm{g}_{\ell}\|^{2}).
\end{align}
Substitution into \eqref{eq:sw4} then leads to
\begin{align*}
A_{\ell}(f_{0}-f_{\ell}) \geq  2C_{\ell}\|\bm{g}_{\ell}\|^2 +  \frac{1}{2}A_{\ell}(\|\bm{g}_{\ell}\|^2 + \|\bm{g}_{0}\|^2) +  (\alpha -2)(f_0 -f_{\ell})  +  \frac{\alpha}{2}(\|\bm{g}_{0}\|^2+\|\bm{g}_{\ell}\|^2) +\alpha \|\bm{g}_0\|^2- \alpha^2 \|\bm{g}_0\|^2.
\end{align*}
Rearranging terms and using $C_{\ell}=\frac{A_{\ell}(A_{\ell}+1)}{2}$, we are left with
\begin{align}
(A_{\ell}+2 -\alpha )(f_0 -f_{\ell})\geq  \left(A_{\ell}^2 + \frac{3A_{\ell}}{2} + \frac{\alpha}{2}\right) \|\bm{g}_{\ell}\|^2 + \left( \frac{A_{\ell}}{2}+\frac{\alpha}{2}+\alpha  -\alpha^2 \right)\|\bm{g}_0\|^2.
\end{align}
Dividing both sides of the above display by $(A_{\ell}+2 -\alpha)$, 
we conclude the proof.

\end{document}